\documentclass[11pt]{article}

\usepackage[final]{acl}
\usepackage{times}
\usepackage{latexsym}
\usepackage{inconsolata}
\usepackage[utf8]{inputenc} 
\usepackage[T1]{fontenc}    
\usepackage{hyperref}       
\usepackage{url}            
\usepackage{booktabs}       
\usepackage{cuted}    
\usepackage{caption}  
\usepackage{amsfonts}       
\usepackage{nicefrac}       
\usepackage{microtype}      
\usepackage{subcaption}    
\usepackage{graphicx}       
\usepackage{amsmath}        
\usepackage{algorithm}      
\usepackage{algorithmic}    
\usepackage{natbib}         
\usepackage{float}
\usepackage{multirow}
\usepackage{makecell}
\usepackage{caption}
\usepackage{placeins}
\usepackage{booktabs}
\usepackage{multirow}
\usepackage{makecell}
\usepackage{adjustbox}
\usepackage[table]{xcolor}
\usepackage{booktabs}
\usepackage{multirow}
\usepackage{makecell}
\usepackage{graphicx}
\usepackage{booktabs}
\usepackage{caption}
\usepackage{array}
\usepackage[table]{xcolor}
\usepackage{caption}
\usepackage{booktabs}
\usepackage{multirow}
\usepackage{makecell}
\usepackage{graphicx}
\usepackage{algorithm}
\usepackage{algorithmic}
\usepackage{caption}
\usepackage{hyperref}
\usepackage{amsthm}
\usepackage{booktabs}
\usepackage{graphicx}
\usepackage{amsmath,amssymb}
\usepackage{booktabs}
\usepackage{tabularx}
\usepackage{xcolor}
\usepackage{makecell}
\usepackage{graphicx}
\usepackage{booktabs}
\usepackage{multirow}
\usepackage{tabularx}
\usepackage{float}
\usepackage{cuted}      
\usepackage{caption}    
\usepackage{placeins}   
\usepackage{placeins}
\usepackage{array}
\usepackage{placeins}
\usepackage{caption}
\usepackage{booktabs}
\usepackage{graphicx}
\usepackage{algorithm}
\usepackage{amsmath}
\usepackage{amssymb}
\usepackage{amsthm}

\newtheorem{lemma}{Lemma}
\newtheorem{proposition}{Proposition}

\definecolor{fdblue}{RGB}{232,241,252}
\definecolor{bbgreen}{RGB}{226,245,226}
\definecolor{lightgrayrow}{RGB}{247,247,247}
\definecolor{nfeorange}{RGB}{210,105,0}
\definecolor{nfeblue}{RGB}{0,55,170}
\definecolor{bbtextgreen}{RGB}{20,110,45}

\newcommand{\fdcell}[1]{\cellcolor{fdblue}{#1}}
\newcommand{\bbcell}[1]{\cellcolor{bbgreen}{#1}}
\newcommand{\nfe}[1]{\textcolor{nfeblue}{#1}}

\title{BlockBatch: Multi-Scale Consensus Decoding for Efficient Diffusion
Language Model Inference}

\author{
Xiaoyou Wu \quad
Cheng-Jhih Shih \quad
Binfei Ji \quad
Yong Liu \quad
Yingyan (Celine) Lin \\
Georgia Institute of Technology \\
\texttt{xwu488@gatech.edu}
}

\makeatletter
\newcommand\blfootnote[1]{%
  \begingroup
  \renewcommand\thefootnote{}%
  \footnotetext{#1}%
  \addtocounter{footnote}{-1}%
  \endgroup
}
\makeatother

\begin{document}
\maketitle
\blfootnote{GitHub code available at \href{https://github.com/Laurence-Wu/BlockBatch}{\texttt{BlockBatch}}.}

\raggedbottom

\begin{abstract}

Diffusion language models (dLLMs) generate text by iteratively denoising multiple token positions in parallel, offering an attractive alternative to strictly autoregressive decoding. In practice, however, block-wise dLLM inference exposes a difficult granularity trade-off: small blocks preserve local conditioning but require many denoising steps, whereas large blocks expose more parallelism but can make premature commitments and accumulate cache error. Existing acceleration methods typically choose a single block size per request, leaving the complementarity among block sizes unused.
We show that block size itself is a useful branching dimension. Different block sizes induce related but non-identical KV-cache trajectories: branches often share an initial prefix, bifurcate at semantically decisive positions, and later agree on syntactically lightweight tokens. Motivated by this structure, we propose BlockBatch, a training-free online inference framework that executes multiple block-size branches for the same request inside a batched forward pass. 
BlockBatch coordinates these branches through confidence-gated token merging, leader-based synchronization, and periodic full-sequence refreshes that re-anchor local block updates to a globally consistent KV state. 
Across 3 representative dLLMs and 4 datasets, BlockBatch reduces denoising NFEs by 26.6\% on average and achieves a 1.33$\times$ average end-to-end speedup over Fast-dLLM while preserving accuracy.
These results identify block-size diversity as a practical and previously underexplored axis for branch-parallel dLLM inference.

\end{abstract}

\section{Introduction}

\begin{figure}[t]
    \centering
    \includegraphics[width=1\linewidth]{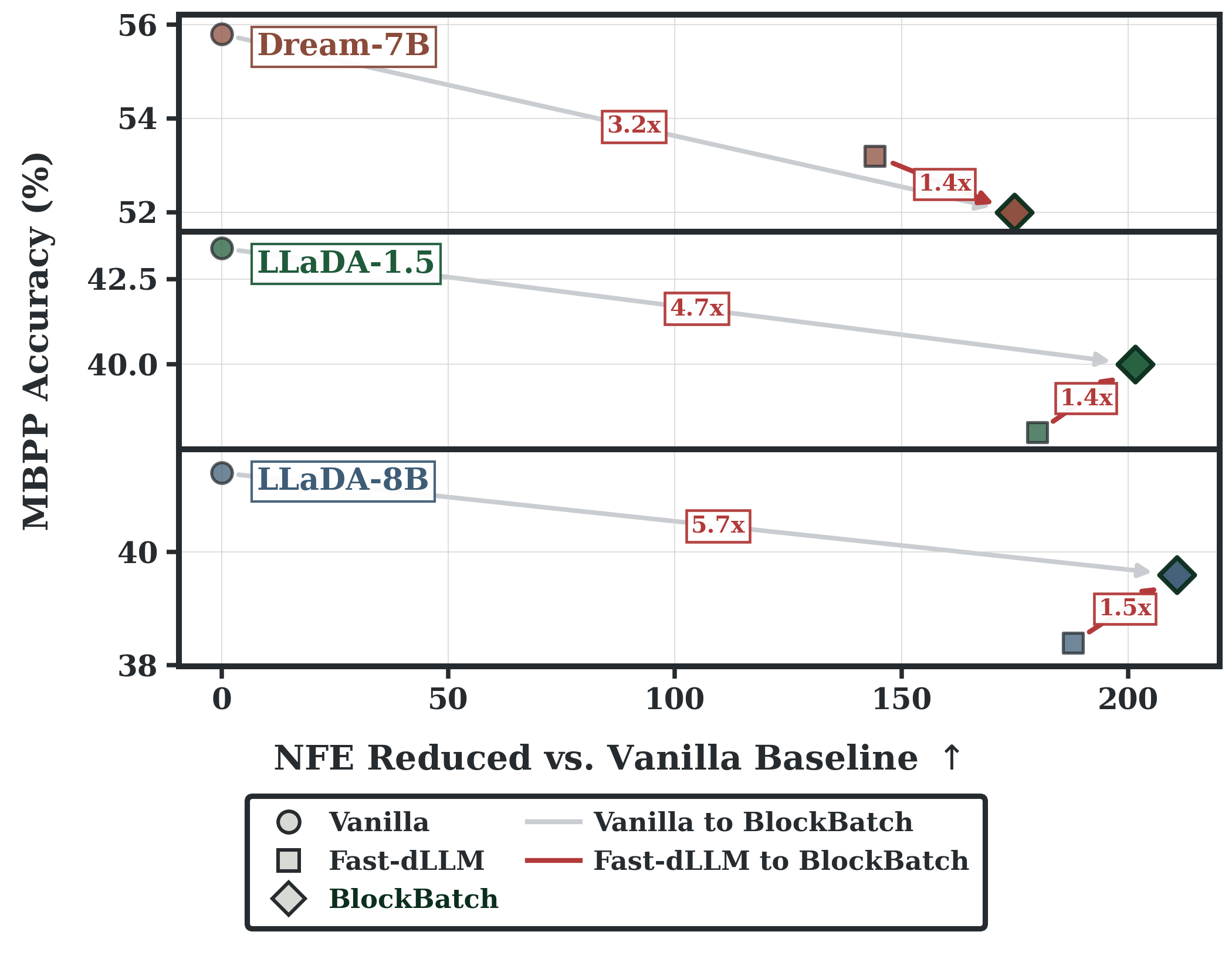}
    \vspace{-2em}
    \caption{
    \textbf{BlockBatch speedup against Fast-dLLM and the vanilla baseline on MBPP.}
}
    \label{fig:teaser}
\end{figure}

Diffusion language models (dLLMs)~\citep{gong2025scalingdiffusionlanguagemodels} replace left-to-right factorization with iterative
masked denoising~\citep{campbell2022continuoustimeframeworkdiscrete,campbell2024generativeflowsdiscretestatespaces,chen2023analogbitsgeneratingdiscrete,vignac2023digressdiscretedenoisingdiffusion}: at each generation round, the model predicts distributions over
many token positions and commits a subset of tokens in parallel. Recent masked
dLLMs~\citep{ye2025dream,nie2025large} have substantially narrowed the gap to autoregressive
language models and, in several regimes, exhibit competitive scaling,
instruction-following, and flexible generation behavior~\citep{sahoo2024simple,nie2025large,ye2025dream,prabhudesai2025diffusion}.
This parallelism is attractive for inference, but it is not free. Committing too few
tokens preserves local conditioning but requires many denoising rounds; committing
too many tokens increases apparent parallelism but can make premature decisions
and degrade cache consistency~\citep{ma2025dkvcache}. Efficient dLLM inference hence depends on
finding the right granularity at which to trade off parallel token updates,
confidence, and state reuse.
\begin{figure*}[t]
    \centering
    \includegraphics[width=0.92\textwidth]{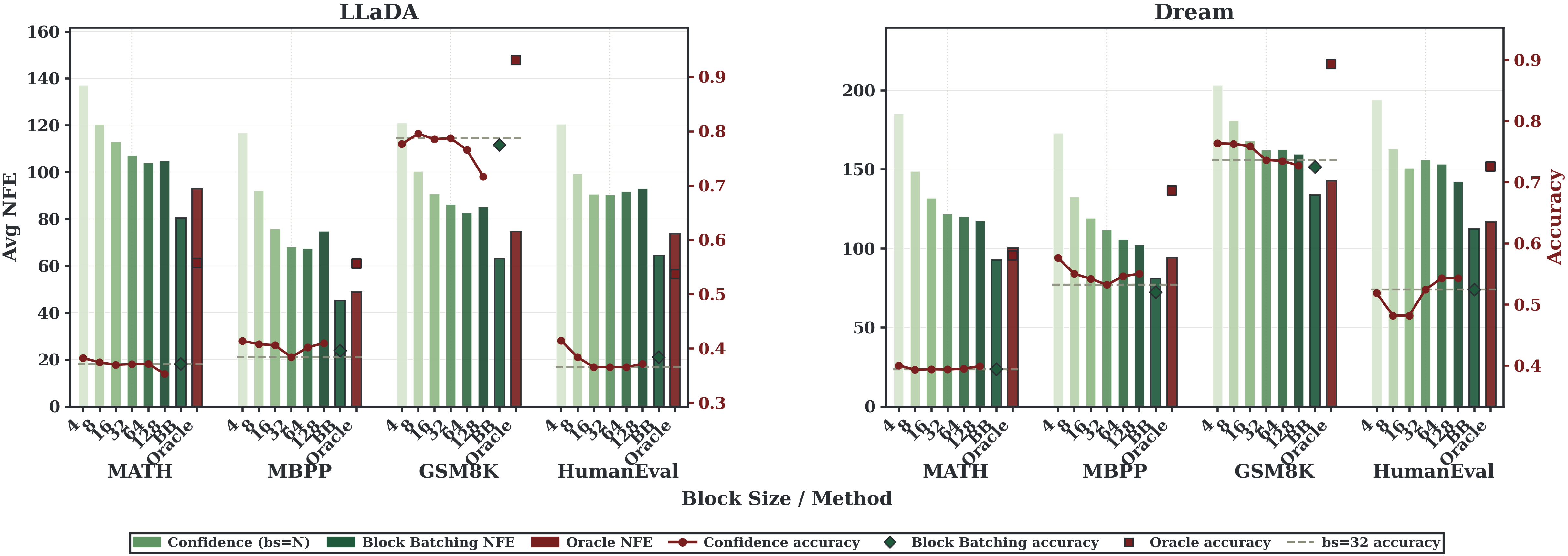}
    \vspace{-0.5em}
    \caption{
\small
\textbf{Per-sample block-size oracle versus fixed block-size decoding.}
Bars report average NFE and curves/markers report accuracy for fixed confidence-decoding block sizes, block batching, and an oracle that selects the best block size per prompt.
The oracle improves the accuracy--NFE tradeoff across models and tasks, motivating block size as a per-sample branching dimension rather than a fixed hyperparameter.
}
    \label{fig:oracle_ablation}
\end{figure*}
A central mechanism for making this tradeoff practical is block-wise inference
with approximate KV reuse~\citep{jiang2026d2cache}. Instead of recomputing the full sequence at every
denoising step, block-wise methods update a local decoding window while reusing,
delaying, or selectively refreshing cached KV states outside the active region~\citep{wu2026fastdllm,liu2025dllmcache,jiang2026d2cache}.
The block size determines the granularity of this process. Small blocks are
conservative: they more closely approximate sequential conditioning but 
require more denoising function evaluations (NFEs)~\citep{cheng2025sdarsynergisticdiffusionautoregressionparadigm}. Large blocks expose more
parallel work per model invocation, but they weaken intra-block conditioning,
accumulate larger cache discrepancies, and may require downstream correction~\citep{nie2025scaling,zhang2025diffusionvsautoregressivelanguage}.
Existing systems therefore typically select a single block size for a request,
either as a fixed hyperparameter or through a local adaptation rule. This
single-trajectory view leaves a natural opportunity unused: different block sizes
may be useful for different parts of the same generation.

Our starting point is that block size is not merely an inference hyperparameter,
but a source of complementary decoding trajectories. In our experiments shown in Fig.~\ref{fig:oracle_ablation}, a
per-sample oracle that chooses among block sizes achieves a better
accuracy--NFE tradeoff than any single fixed block-size configuration. This
indicates that there is no globally optimal block size across prompts, tasks, or
models. The observation suggests a different form of branch-parallel decoding.
Prior acceleration methods for autoregressive language models often branch along
the token-candidate axis, producing speculative or tree-structured continuations
that are later verified, accepted, or discarded
\citep{leviathan2023fast,cai2024medusa}. 
In contrast, we branch along the block-size
axis. All branches share the same prompt and initial KV cache, but they advance
with different denoising granularities and therefore trace different trajectories
through the space of partially decoded sequences and KV-cache states.

These block-size branches are neither independent generations nor redundant
copies. At the token level, they often agree on an initial prefix, bifurcate at a
small number of semantically decisive positions, and later re-agree at positions
with low local ambiguity. Our token-category analysis shows that such later-stage
agreement is concentrated on syntactically lightweight tokens (see Appendix \ref{app:later-stage-consensus}). In contrast, bifurcation positions tend to carry the semantic content
that determines the final answer. 
At the KV-cache level, block denoising acts as
a local update: it modifies only the active region while relying on cached context
elsewhere, where a full-sequence refresh re-anchors the branch to a globally consistent KV state. Together, these observations suggest
that useful multi-branch dLLM inference should preserve branch diversity around
bifurcation points, exchange information only when branches are compatible, and
periodically repair stale cache states.

Motivated by the above findings, we propose BlockBatch, a training-free
online inference framework for efficient dLLM decoding. For each request, BlockBatch instantiates multiple branches with different block sizes and executes
their active windows inside a single batched model invocation. The branches are
coordinated by three important operations. First, a confidence-gated merge
transfers token proposals only between compatible branches and only when the
destination branch assigns sufficient probability to the proposed token. Second,
a leader-based synchronization rule prevents lagging branches from wasting
computation by copying the sequence state and KV cache of a sufficiently advanced
branch. Third, periodic full-sequence refreshes recompute KV states for active
branches, correcting accumulated drift from local block updates. Large-block
branches provide aggressive progress, small-block branches provide conservative
alternatives, and merge/sync/full-sequence refresh operations prevent the search from
degenerating into unrelated or stale trajectories.

In summary, 
our contributions are as follows:
\begin{itemize}
    \item \textbf{Block-size diversity as branch parallelism.}
    We identify block size as an underexplored axis for multi-branch dLLM
    inference and show that different block sizes provide complementary
    accuracy--efficiency tradeoffs across samples.

    \item \textbf{Token- and KV-level characterization.}
    We analyze how block-size branches relate to one another, revealing
    token-level bifurcation and later-stage consensus as well as KV-cache
    dynamics in which local block updates are periodically corrected by
    full-sequence refreshes.

    \item \textbf{A training-free inference framework.}
    We introduce BlockBatch, which executes multiple block-size branches in
    parallel and coordinates them through confidence-gated merging,
    leader-based synchronization, and full-sequence KV refresh.

    \item \textbf{Empirical validation.}
    Experiments on three dLLMs (LLaDA-1.5-8B, LLaDA-Instruct-8B, Dream-7B) across four benchmarks (GSM8K, MATH, HumanEval, MBPP) show that BlockBatch reduces denoising steps by 26.6\% and achieves a 1.33$\times$ end-to-end speedup on average over Fast-dLLM while preserving accuracy.
\end{itemize} 

\section{Preliminaries}

\subsection{Diffusion LLMs as a Discrete Markov Process}

Unlike continuous diffusion models that operate on Gaussian noise
\citep{sohl2015deep, song2019score, ho2020ddpm, song2020ddim, song2021sde},
Masked Diffusion Language Models (MDLMs) operate in a discrete, categorical
state space
\citep{hoogeboom2021argmax, austin2021d3pm, nie2025large,
hoogeboom2021argmaxflowsmultinomialdiffusion}.
Let $\mathcal{V}$ denote the vocabulary and let $\texttt{[M]}$ denote the
special \texttt{[MASK]} token. The extended vocabulary is
$\tilde{\mathcal{V}} = \mathcal{V} \cup \{\texttt{[M]}\}$. For a sequence of
length $L$, the state at generative step $t$ is represented as
$\mathbf{X}_t \in \tilde{\mathcal{V}}^L$.

The generative reverse process transitions from a fully masked state
$\mathbf{X}_T = [\texttt{[M]}, \dots, \texttt{[M]}]$ to a decoded state
$\mathbf{X}_0 \in \mathcal{V}^L$
\citep{sahoo2024simple,
ho2020ddpm}.
At each step $t$, the network observes $\mathbf{X}_t$ and outputs a categorical
distribution $p_\theta(x^{(i)} = v \mid \mathbf{X}_t)$ for each position $i$.
Standard diffusion LLMs decode this distribution using a confidence-based
mechanism. The top-1 confidence score is
$c_{t,i} = \max_{v \in \mathcal{V}} p_\theta(v \mid \mathbf{X}_t)$.
Given a confidence threshold $\tau_{\mathrm{conf}}$ and the most confident masked index
$i^*$, the deterministic transition for position $i$ is
\begingroup
\setlength{\abovedisplayskip}{3pt}
\setlength{\belowdisplayskip}{3pt}
\begin{equation}
\makebox[\columnwidth][c]{%
\resizebox{1.03\columnwidth}{!}{$\displaystyle
x_{t-1,i}
=
\begin{cases}
\arg\max\limits_{v \in \mathcal{V}} p_\theta(v \mid \mathbf{X}_t),
&
\text{if } x_{t,i}=\texttt{[M]} \text{ and } 
\left(c_{t,i}\geq\tau_{\mathrm{conf}} \lor i=i^*\right), \\[0.25em]
x_{t,i},
& \text{otherwise.}
\end{cases}
$}%
}
\end{equation}
\endgroup

We can formalize the overall transition probability as
$P(\mathbf{X}_{t-1} \mid \mathbf{X}_t)$. The sequence therefore forms a
discrete-time Markov chain:
\begingroup
\setlength{\abovedisplayskip}{3pt}
\setlength{\belowdisplayskip}{3pt}
\begin{equation}
\resizebox{0.98\columnwidth}{!}{$\displaystyle
P(\mathbf{X}_{t-1} \mid \mathbf{X}_t, \mathbf{X}_{t+1}, \dots, \mathbf{X}_T)
=
P(\mathbf{X}_{t-1} \mid \mathbf{X}_t).
$}
\end{equation}
\endgroup

\subsection{KV Cache in Diffusion LLMs}

For batched inputs, prefix caching provides the first source of reuse by sharing
the KV states of identical prompt prefixes across requests. This reuse pattern is
standard in autoregressive Transformer serving, where KV caching avoids
recomputing previous key--value activations during decoding
\citep{vaswani2017attention, shazeer2019fast, dao2022flashattention,
kwon2023pagedattention}. Modern LLM serving systems further improve prefix reuse
and cache sharing through paged cache allocation, radix-tree or prefix-tree cache
layouts, and shared-prefix attention kernels.

Under the semi-autoregressive decoding paradigm,
Fast-dLLM~\citep{wu2026fastdllm} introduces block-wise approximate KV caching:
previously decoded blocks are reused, while the current block is recomputed
during denoising. Several follow-up methods refine which cache entries should be
recomputed and which should be retained across steps. dLLM-Cache
\citep{liu2025dllmcache} observes that dLLM inference contains a mostly static
prompt and a partially dynamic response, and applies long-interval prompt
caching together with feature-similarity-guided response updates. dKV-Cache
exploits token-dependent representation dynamics through delayed KV reuse,
storing decoded-token KV states one step later and providing decode and greedy
variants that trade quality for speed~\citep{ma2025dkvcache}. d$^2$Cache
selects masked tokens using confidence and a certainty prior, then selects
prompt and decoded tokens using attention-aware importance before updating only
the selected KV states \citep{jiang2026d2cache}. Related dLLM acceleration
methods also exploit guided unmasking, local determinism, block-diffusion
schedules, or hierarchical caching to reduce denoising cost
\citep{hu2025freecache, kong2025localleap, fu2025bitsrounds,
wu2025fastdllmv2}.

More broadly, these dLLM cache designs are connected to a larger body of
KV-cache management work in autoregressive LLM inference. Prior systems retain
important tokens, evict or compress stale cache entries, quantize KV states, or
fetch only query-relevant cache pages
\citep{zhang2023h2o, liu2023scissorhands, xiao2024streamingllm,
li2024snapkv, cai2024pyramidkv, tang2024quest, liu2024kivi,
lee2024infinigen}. These methods can be viewed through a common systems lens:
selecting which KV states to reuse and which to refresh, then balancing stale and
refreshed cache states to trade accuracy against latency.
\section{KV-Level and Token-Level Characterizations of dLLM Inference}
\label{sec:characterization}

\subsection{Block-Size Diversity as a Branching Axis}
\label{sec:blocksize-diversity}

Existing dLLM inference systems pick a single block size per request, treating it as a hyperparameter. However, this design choice is suboptimal, as the accuracy-maximizing block size varies substantially across prompts.
To quantify this, we evaluate a \emph{per-sample oracle} that selects, for each prompt, the block size in $\{4, 8, 16, 32, 64, 128\}$ that yields the highest accuracy under confidence-based decoding in Fig.~\ref{fig:oracle_ablation}. The oracle is not realizable at inference time, but it upper-bounds any  sample-wise block-size selection policy built on top of confidence decoding. 

Fig.~\ref{fig:oracle_ablation} reports NFE and accuracy on MATH~\citep{hendrycksmath2021} and MBPP~\citep{austin2021program} for LLaDA-Instruct-8B~\citep{nie2025large} and Dream-Base-7B~\citep{ye2025dream}. The oracle outperforms every fixed block size on both models and tasks while using fewer NFEs than the smaller block-size configurations, confirming that block-size diversity carries genuine per-sample information.
These observations motivate treating \textbf{block size} as a \textbf{branching dimension} rather than a hyperparameter. Notably, BlockBatch achieves even lower NFE while preserving accuracy through a per-step policy of sync and merge, which is a finer granularity version of the per sample policy, demonstrating that branching dimension is still an exploitable dimension.

\subsection{KV-Level Characterization: KV-Cache Trajectory Diagnostics}
\label{sec:kv-level-characterization}

\begin{figure}[t]
  \centering
  \includegraphics[width=\linewidth]{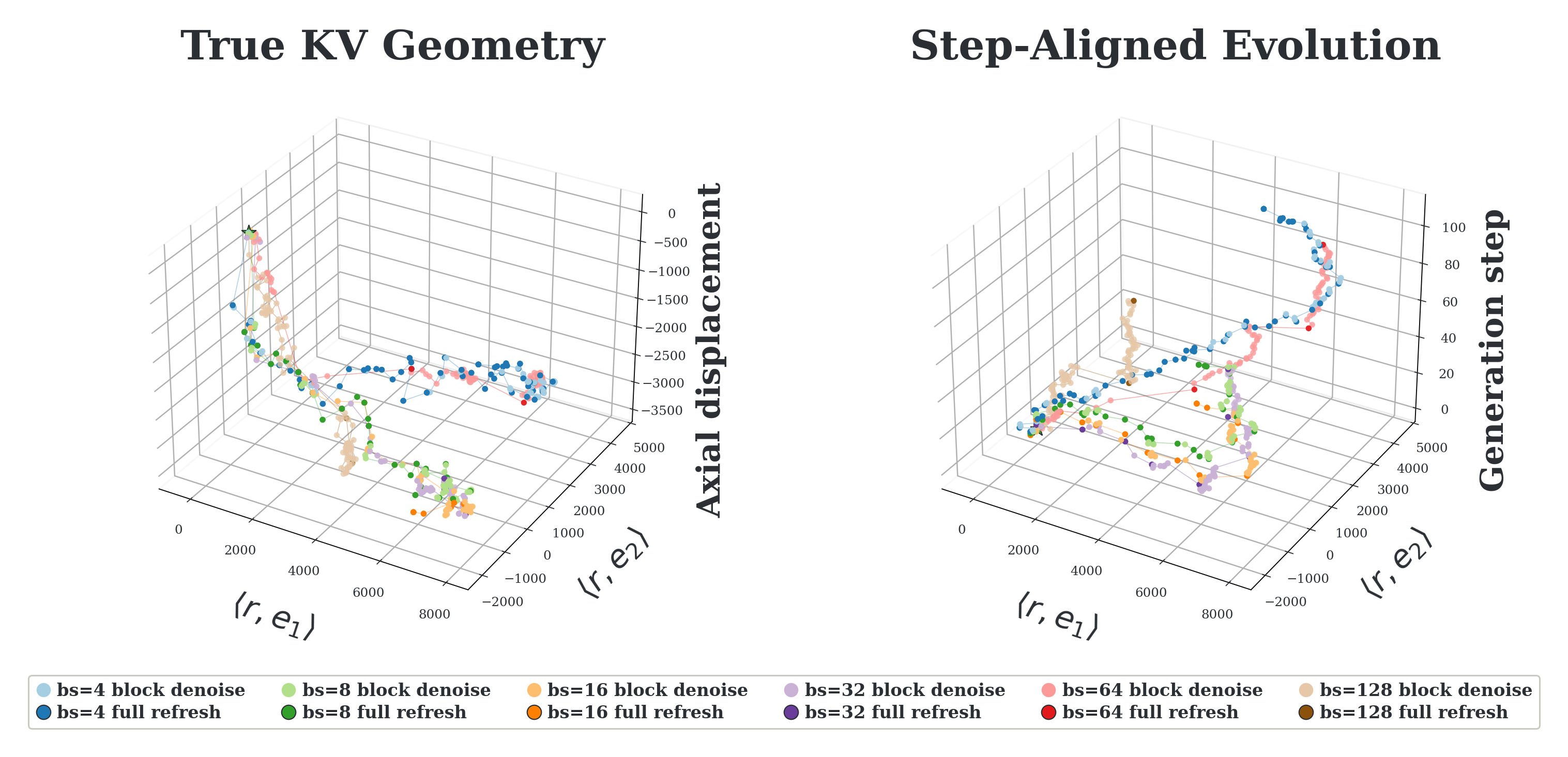}
  \caption{
  \textbf{KV-cache trajectory diagnostic on \texttt{HumanEval} sample~3.}
  Both panels visualize logged KV-cache states in the same tangent coordinate
  system around the prompt anchor $c_0$. The horizontal axes are the tangent
  projections $(\langle r,e_1\rangle,\langle r,e_2\rangle)$, where
  $e_1$ and $e_2$ are the leading SVD directions of the centered residuals.
  \textbf{Left: True KV Geometry.}
  The vertical axis shows axial displacement from the prompt anchor. As
  generation progresses, different block-size branches bifurcate and spread
  across the tangent plane, supporting the tangent-space bifurcation behavior
  described in Proposition~3 / Sec.~\ref{prop:Branch_bifurcation_tanspace}.
  \textbf{Right: Step-Aligned Evolution.}
  The vertical axis is replaced by generation step, revealing the temporal
  structure of the same trajectories. Adjacent block-denoise updates remain
  locally clustered, whereas large trajectory jumps are introduced by
  full-sequence refreshes. This pattern indicates that refresh operations,
  rather than local block denoising alone, are the main source of large KV-cache
  trajectory displacement, consistent with
  Proposition~1 / Sec.~\ref{prop:moreUpdateFromFullRefresh}.
  }
  \label{fig:kv-trajectory-humaneval3}
\end{figure}

While Fig.~\ref{fig:oracle_ablation} shows that block size is a useful
branching axis, the block-size parameter itself is not the final explanation
for the observed differences. Instead, we show that the underlying mechanism
through which block size changes NFE and accuracy is the trajectory induced in
KV-cache space.

Different block sizes change how many token positions are updated under a
shared cached context, how long a branch relies on local cache reuse, and how
frequently the branch is re-anchored by a full-sequence refresh. Thus, the
effect of block size is expressed through the sequence of KV-cache states
visited during inference.

Let $K_t^{(b)}$ denote the flattened KV-cache state of branch $b$ at event
$t$. At the KV level, block-wise dLLM inference alternates between a
\emph{block-denoise} update,
$K_{t+1}^{(b)}=K_t^{(b)}+\Delta_{\mathrm{blk}}(K_t^{(b)},x_t^{(b)},B_t^{(b)})$,
and a \emph{full-sequence refresh}, $K_{t+1}^{(b)}=F(x_t^{(b)})$. In the first
operation, only the active block is recomputed while the model attends to cached
KV states outside the current window. In the second operation, the KV cache is
recomputed from the entire current sequence. Appendix~\ref{app:kv-cache-vector-space}
formalizes these operations. In particular, Proposition~1
(Sec.~\ref{prop:moreUpdateFromFullRefresh}) shows that block-denoise updates should produce smaller
local movements in KV-cache vector space than full-sequence refreshes, because the direct
update is restricted to the active block.

This distinction is visible in Fig.~\ref{fig:kv-trajectory-humaneval3}. The
projected trajectories show that adjacent block-denoise steps move locally within a
compact region of the KV space, whereas full-sequence refreshes produce larger
global corrections between clusters. This supports the interpretation that
block-size branches are not independent generations but rather different local
traversals of a shared KV-cache geometry. Larger blocks tend to make more
aggressive local progress under reused cache states, while smaller blocks follow
more conservative trajectories with finer-grained conditioning.

The same view also explains why periodic refresh is necessary. A block-denoise step is efficient but
may increase this error because only a local window is updated. A full-sequence refresh
is more expensive but contracts the error by recomputing the cache from the full
token state. Proposition~2 (Sec.~\ref{refresh_stablize}) shows that if a
refresh is applied every $R$ block-denoise steps, the cache-consistency error
remains bounded when $\beta(1+\lambda_b)^R<1$. This gives a KV-space
explanation for the refresh interval used by BlockBatch: full-sequence refreshes are not
merely an implementation detail, but rather the mechanism that prevents local block
updates from drifting too far from the model-consistent KV manifold.\subsection{Token-Level Characterization: Bifurcation Tokens and Later-Stage Consensus}
\label{sec:consensus}

\begin{figure}[t]
\centering
\vspace{-0.4em}

\includegraphics[width=1.0\linewidth]{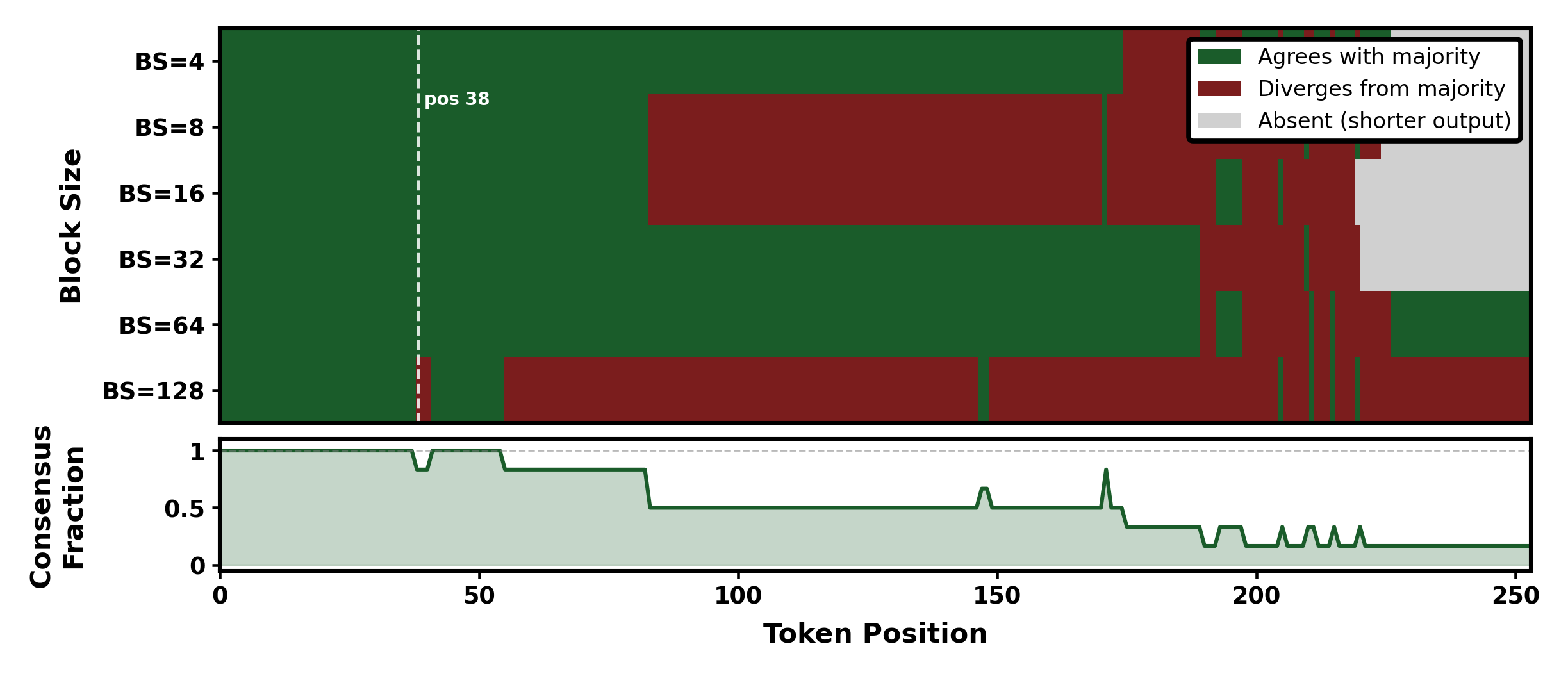}

\vspace{-1em}
\caption{
\small
Block-size branch consensus on HumanEval sample 3. Branches share an identical prefix for 37 tokens, after which BS = 128 bifurcates first and smaller blocks split off later. Isolated green columns inside red regions (e.g. pos 43, 150, 170) are later-stage consensus events — previously diverged branches briefly re-agree on a token while their KV trajectories remain distinct, as shown in Fig.~\ref{fig:kv-trajectory-humaneval3}.
}
\label{fig:consensus_humaneval_3}
\vspace{-0.6em}
\end{figure}
During block-batched inference, branches initialized with different block sizes begin from the same prompt and prefill cache, so their early decoded tokens often remain identical (though the length of the shared prefill varies across block sizes; see Tab.~\ref{tab:bifurcate-length} in Appendix~\ref{app:token-level}). Divergence typically occurs at specific \emph{bifurcation tokens}: once two branches commit different tokens at the same position, their subsequent token distributions and KV-cache trajectories begin to separate, since later predictions are conditioned on the current partially decoded sequence and its associated cache state. 

Interestingly, this token-level divergence does not always persist monotonically. Even after an early bifurcation, branches may later produce the same token at the same position. We refer to this phenomenon as \emph{later-stage consensus}. As shown in Fig.~\ref{fig:consensus_humaneval_3}, later-stage consensus appears around token position 190, highlighted in a deeper green.

However, later-stage consensus should not be interpreted as full trajectory convergence. Agreement on a later token only indicates that different branches selected the same discrete token at that position. Their underlying KV states, token histories, and conditional distributions may still differ substantially. In this sense, it is a token-level agreement event rather than a guarantee of KV-level convergence. This distinction is important: branches can agree on individual high-confidence tokens while still following different inference trajectories in KV-cache vector space.

\section{Method}
\label{sec:method}

Inspired by the characterizations in Sec.~\ref{sec:characterization}, we
observe that the main opportunity in block-wise dLLM inference is not to choose
a single globally optimal block size, but to exploit the complementary
trajectories induced by multiple block sizes during the same decoding process. We therefore propose \textbf{BlockBatch} (Fig.~\ref{fig:block_batching_overview}), a training-free framework that runs multiple block-size branches for a single request in one batched forward pass. Each branch alternates cheap local \emph{block denoising} with periodic \emph{full-sequence refreshes} that re-anchor them to a consistent KV state, while a confidence-gated \emph{merge} and leader-based \emph{sync} operations exchange compatible tokens and prune stale branches around bifurcation points.


\begin{figure*}[t]
  \centering
  \includegraphics[width=\linewidth]{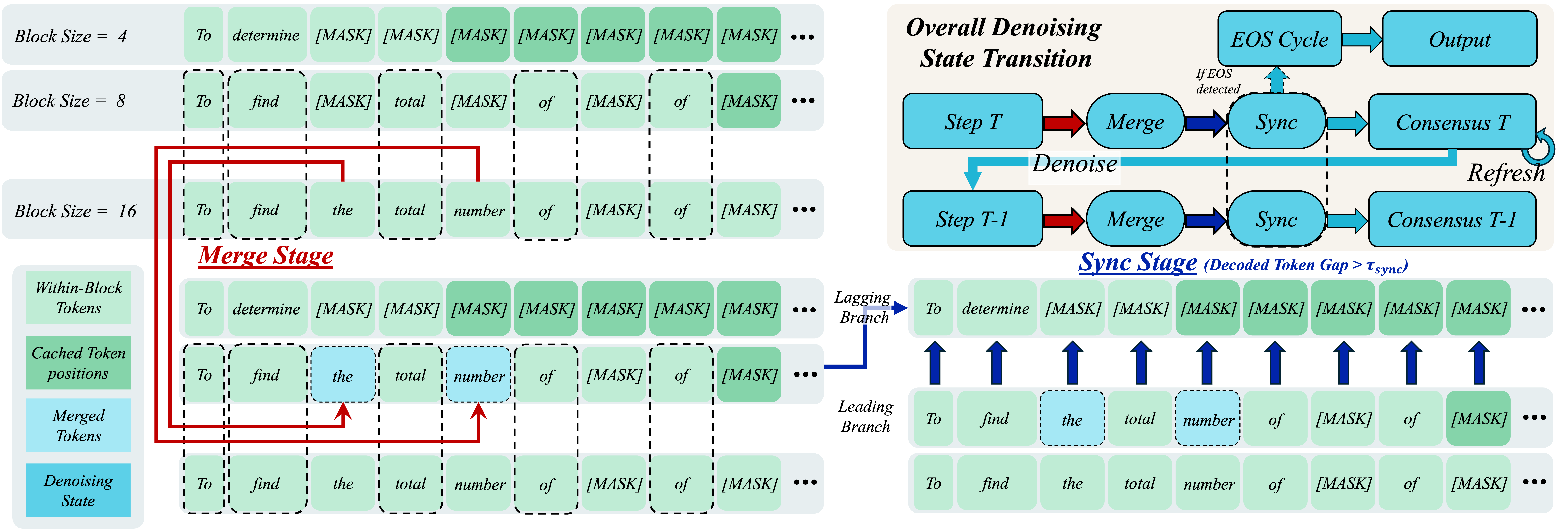}
  \vspace{-1em}
\caption{
\textbf{Overview of BlockBatch.}
The left and bottom parts illustrate confidence-gated merge and leader-based synchronization. The upper-right panel summarizes the global denoising
state transition: each step performs merge, sync, and periodic
refresh. If a branch predicts \texttt{EOS} before all preceding positions are decoded,
it enters an EOS cycle until the prefix before \texttt{EOS} is continuous, preventing
premature termination before final output.
}
  \label{fig:block_batching_overview}
\end{figure*}




\subsection{Exploiting Bifurcation}
\label{sec:bifur}
Block-wise denoising may enter low-progress regions where local updates do not
advance enough tokens. Instead of treating multiple branches only as redundant
work, we use their diversity as a source of useful proposals. Branches can share
tokens and KV states when their decoded prefixes are compatible. The detailed algorithm is shown in Algorithm~\ref{alg:merge-sync} (Appendix~\ref{app:block-batching-algorithm}), where the overview is shown in Fig.~\ref{fig:block_batching_overview}.

\paragraph{Confidence-Gated Merge}
The merge operation transfers high-confidence token proposals across compatible
branches. A source branch is compatible with a destination branch if all positions
decoded in both branches are exactly the same. For a masked position in the destination branch,
we inspect tokens proposed by compatible source branches and accept the best
candidate only when the destination branch's own probability map assigns that
token confidence above a fixed threshold. This gives lagging branches free
progress without forcing them to accept tokens inconsistent with their local
state.

\paragraph{Leader-Based Sync}
The sync operation handles larger progress gaps. If the leading branch has
decoded more than a threshold number of tokens beyond a lagging branch (this number is controlled by the sync threshold, see Appendix~\ref{app:threshold_ablations} for ablation Tab.~\ref{tab:sync_threshold}), the
lagging branch copies the leader's sequence state and KV cache row. Its block
window is then realigned to the first remaining masked position. Sync therefore
acts as a controlled reset: it preserves the fastest reliable trajectory while
preventing slow branches from wasting computation within stale regions of the KV
space.

\subsection{System-Level Optimization}
Fig.~\ref{fig:gpu_optimization}-(a) shows per-step block denoising latency on an H200 across varying batch size $\times$ block size combinations. Latency stays nearly flat in the memory-bound regime but rises sharply once the total token count exceeds roughly 256, where denoising becomes compute-bound. We exploit this observation so that multi-branch denoising costs about the same as denoising a single branch.

\begin{figure}[t]
  \centering
  \includegraphics[width=\linewidth]{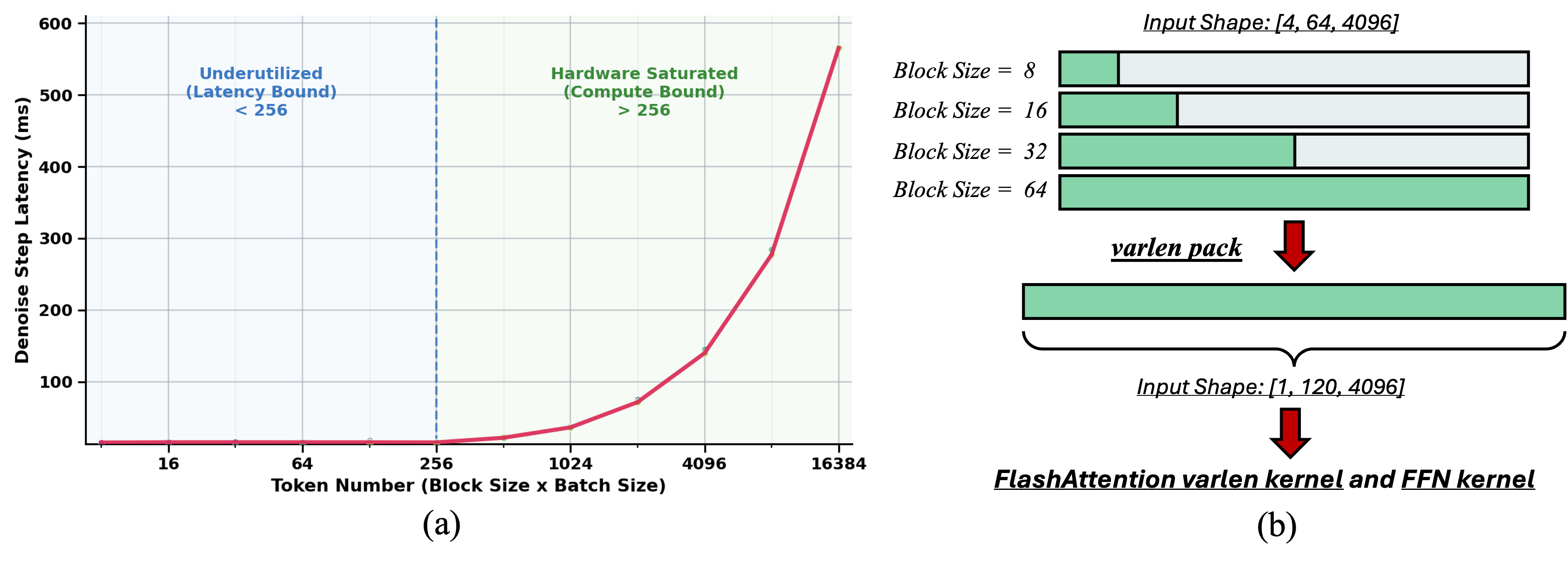}
  \vspace{-1em}
  \caption{
  \textbf{(a)} Per-step denoise latency on H200 vs.\ total token count.
  \textbf{(b)} \textbf{BlockBatch Denoise Step.} Unmasked query positions from all branches are packed into one variable-length buffer consumed by both varlen attention and the FFN, while a shared unified KV cache preserves per-branch attention semantics exactly.
 }
\label{fig:gpu_optimization}
\end{figure}

\paragraph{BlockBatch Denoise Optimization.}
At each denoise step, only the unmasked positions in each branch's current block produce queries. We pack these queries from all branches into a single variable-length buffer and issue one forward pass, supplying the cumulative offsets that FlashAttention's varlen kernel requires. The same packed tensor flows through every transformer block, so both attention and the FFN operate on this compact buffer end-to-end (Fig.~\ref{fig:gpu_optimization}-(b)).


\section{Experiments}
\label{sec:eval}
\subsection{Setup}
We evaluate BlockBatch on LLaDA-1.5-8B, LLaDA-Instruct-8B~\cite{nie2025large},
and Dream-Base-7B~\cite{ye2025dream}. We compare against Vanilla diffusion
decoding, Fast-dLLM dual cache~\cite{wu2026fastdllm}, and
LocalLeap~\cite{kong2025localleap}. Unless stated otherwise, all methods
generate up to 256 tokens with batch size 1 and confidence threshold 0.9. For
Fast-dLLM dual cache, we use block size 32. For LocalLeap, we use the default
script settings: anchor threshold 0.9, radius 4, and relaxed threshold 0.75 for
LLaDA and 0.8 for Dream. For BlockBatch, we run block sizes
$\{4,8,16,32,64,128\}$ in one fused generation pass. Each branch owns an
independent sequence row and KV-cache row; batching only changes execution
shape. We count one batched model forward as one NFE:
$
\mathrm{NFE}_{\mathrm{total}} = \mathrm{NFE}_{\mathrm{init}} + \mathrm{NFE}_{\mathrm{block}} + \mathrm{NFE}_{\mathrm{refresh}}.$
We evaluate on GSM8K~\cite{cobbe2021training}, MATH~\cite{hendrycksmath2021},
HumanEval~\cite{chen2021evaluating}, and MBPP~\cite{austin2021program}.
Accuracy is computed with the official task-specific evaluation logic; for code
tasks, generated code is postprocessed with the model-specific sanitization
pipeline before execution-based evaluation. Additional prompt, baseline, and
ablation settings are provided in Appendix~\ref{app:evaluation-setup}.

\subsection{Main Results}

Tab.~\ref{tab:block_batching_compact_b32} reports latency, NFE, and
accuracy across twelve model--task settings. The main result is that
BlockBatch is the most NFE-efficient method in every setting: it achieves
the lowest or tied-lowest NFE across all twelve comparisons, with
47.7--82.3\% fewer denoising NFEs than Vanilla and 1.7--5.8$\times$
end-to-end speedup. Beyond the Vanilla baseline, BlockBatch consistently
improves over Fast-dLLM, reducing NFE in all twelve settings by 26.6\%
on average and up to 33.6\% on LLaDA-Instruct-8B/MBPP. It is also faster
than or tied with Fast-dLLM in all settings, reaching up to 2.05$\times$
additional speedup on LLaDA-1.5-8B/HumanEval. Compared with LocalLeap,
BlockBatch obtains lower or tied NFE in all settings.

\begin{table*}[t]
\centering
\caption{
Benchmark results for Vanilla, Fast-dLLM Dual Cache, LocalLeap, and BlockBatch across LLaDA-1.5-8B, LLaDA-Instruct-8B, and Dream-Base-7B.
For Fast-dLLM Dual Cache, we report only the block size $32$ setting.
\emph{Notation:} Fast-dLLM denotes Fast-dLLM Dual Cache with block size $32$.
Green cells indicate BlockBatch results.
Subscripts on Latency show speedup over Vanilla; subscripts on NFE show reduction percentage relative to Vanilla.
}
\label{tab:block_batching_compact_b32}
\vspace{3pt}
\setlength{\tabcolsep}{3pt}
\renewcommand{\arraystretch}{1.15}
\small
\providecommand{\nfe}[1]{#1}
\newcommand{\spd}[1]{{\scriptsize\textcolor{bbtextgreen}{$\downarrow$#1}}}
\newcommand{\slow}[1]{{\scriptsize\textcolor{red}{$\uparrow$#1}}}
\resizebox{\textwidth}{!}{%
\begin{tabular}{llccccccccc}
\toprule
& 
& \multicolumn{3}{c}{\textbf{LLaDA-1.5-8B}} 
& \multicolumn{3}{c}{\textbf{LLaDA-Instruct-8B}} 
& \multicolumn{3}{c}{\textbf{Dream-Base-7B}} \\
\cmidrule(lr){3-5}
\cmidrule(lr){6-8}
\cmidrule(lr){9-11}
Task & Method 
& Lat. (s) $\downarrow$ & NFE $\downarrow$ & ACC (\%) $\uparrow$ 
& Lat. (s) $\downarrow$ & NFE $\downarrow$ & ACC (\%) $\uparrow$ 
& Lat. (s) $\downarrow$ & NFE $\downarrow$ & ACC (\%) $\uparrow$ \\
\midrule
\multirow{4}{*}{\makecell[l]{GSM8K}}
& Vanilla              
& 12.5 & 256.0 & 79.91
& 13.1 & 256.0 & 77.10 
& 10.4 & 256.0 & \textbf{75.13} \\
& \fdcell{Fast-dLLM}   
& \fdcell{4.9\spd{2.6$\times$}} & \fdcell{\nfe{84.7}\spd{67\%}} & \fdcell{79.68}
& \fdcell{2.7\spd{4.9$\times$}} & \fdcell{\nfe{86.4}\spd{66\%}} & \fdcell{\textbf{78.70}} 
& \fdcell{3.7\spd{2.8$\times$}} & \fdcell{\nfe{162.3}\spd{37\%}} & \fdcell{73.62} \\
& LocalLeap            
& 4.7\spd{2.7$\times$} & 72.5\spd{72\%} & \textbf{81.27}
& 2.3\spd{5.7$\times$} & 64.4\spd{75\%} & 78.16 
& 6.5\spd{1.6$\times$} & 142.9\spd{44\%} & 72.71 \\
& \bbcell{BlockBatch}  
& \bbcell{4.2\spd{3.0$\times$}} & \bbcell{\textbf{63.6}\spd{75\%}} & \bbcell{77.56}
& \bbcell{2.4\spd{5.5$\times$}} & \bbcell{\textbf{63.2}\spd{75\%}} & \bbcell{77.48} 
& \bbcell{3.7\spd{2.8$\times$}} & \bbcell{\textbf{133.8}\spd{48\%}} & \bbcell{72.48} \\
\midrule
\multirow{4}{*}{\makecell[l]{MATH}}
& Vanilla              
& 9.04 & 256.0 & \textbf{34.78}
& 9.7 & 256.0 & \textbf{40.14} 
& 8.1 & 256.0 & \textbf{40.10} \\
& \fdcell{Fast-dLLM}   
& \fdcell{5.5\spd{1.6$\times$}} & \fdcell{\nfe{104.4}\spd{59\%}} & \fdcell{33.20}
& \fdcell{3.3\spd{2.9$\times$}} & \fdcell{\nfe{107.8}\spd{58\%}} & \fdcell{37.12} 
& \fdcell{2.5\spd{3.2$\times$}} & \fdcell{\nfe{121.9}\spd{52\%}} & \fdcell{39.34} \\
& LocalLeap            
& 3.6\spd{2.5$\times$} & 80.3\spd{69\%} & 31.74
& 2.8\spd{3.5$\times$} & 85.3\spd{67\%} & 32.18 
& 3.3\spd{2.5$\times$} & \textbf{92.8}\spd{64\%} & 37.98 \\
& \bbcell{BlockBatch}  
& \bbcell{3.4\spd{2.7$\times$}} & \bbcell{\textbf{77.6}\spd{70\%}} & \bbcell{32.24}
& \bbcell{2.4\spd{4.0$\times$}} & \bbcell{\textbf{80.4}\spd{69\%}} & \bbcell{37.20} 
& \bbcell{2.2\spd{3.7$\times$}} & \bbcell{\textbf{92.8}\spd{64\%}} & \bbcell{39.44} \\
\midrule
\multirow{4}{*}{\makecell[l]{HumanEval}}
& Vanilla              
& 5.77 & 256.0 & \textbf{43.29}
& 5.7 & 256.0 & \textbf{40.24} 
& 4.7 & 256.0 & 50.00 \\
& \fdcell{Fast-dLLM}   
& \fdcell{4.3\spd{1.3$\times$}} & \fdcell{\nfe{88.7}\spd{65\%}} & \fdcell{39.02}
& \fdcell{2.6\spd{2.2$\times$}} & \fdcell{\nfe{90.4}\spd{65\%}} & \fdcell{36.59} 
& \fdcell{3.1\spd{1.5$\times$}} & \fdcell{\nfe{156.1}\spd{39\%}} & \fdcell{\textbf{52.44}} \\
& LocalLeap            
& 2.0\spd{2.9$\times$} & 75.8\spd{70\%} & 42.07
& 2.1\spd{2.7$\times$} & 69.8\spd{73\%} & 35.98 
& 2.6\spd{1.8$\times$} & 123.9\spd{52\%} & 48.78 \\
& \bbcell{BlockBatch}  
& \bbcell{2.1\spd{2.7$\times$}} & \bbcell{\textbf{63.0}\spd{75\%}} & \bbcell{39.63}
& \bbcell{2.0\spd{2.9$\times$}} & \bbcell{\textbf{64.6}\spd{75\%}} & \bbcell{38.41} 
& \bbcell{2.8\spd{1.7$\times$}} & \bbcell{\textbf{112.5}\spd{56\%}} & \bbcell{\textbf{52.44}} \\
\midrule
\multirow{4}{*}{\makecell[l]{MBPP}}
& Vanilla              
& 9.50 & 256.0 & \textbf{43.40}
& 9.9 & 256.0 & \textbf{41.40} 
& 8.0 & 256.0 & \textbf{55.80} \\
& \fdcell{Fast-dLLM}   
& \fdcell{4.2\spd{2.3$\times$}} & \fdcell{\nfe{76.1}\spd{70\%}} & \fdcell{38.00}
& \fdcell{2.2\spd{4.5$\times$}} & \fdcell{\nfe{68.2}\spd{73\%}} & \fdcell{38.40} 
& \fdcell{2.6\spd{3.1$\times$}} & \fdcell{\nfe{111.9}\spd{56\%}} & \fdcell{53.20} \\
& LocalLeap            
& 2.7\spd{3.5$\times$} & 58.0\spd{77\%} & 40.20
& 1.9\spd{5.2$\times$} & 56.1\spd{78\%} & \textbf{41.40} 
& 2.9\spd{2.8$\times$} & 82.8\spd{68\%} & 54.60 \\
& \bbcell{BlockBatch}  
& \bbcell{2.6\spd{3.7$\times$}} & \bbcell{\textbf{54.5}\spd{79\%}} & \bbcell{40.00}
& \bbcell{1.7\spd{5.8$\times$}} & \bbcell{\textbf{45.3}\spd{82\%}} & \bbcell{39.60} 
& \bbcell{2.3\spd{3.5$\times$}} & \bbcell{\textbf{81.1}\spd{68\%}} & \bbcell{52.00} \\
\bottomrule
\end{tabular}%
}
\end{table*}

\subsection{Ablation Studies}
\label{subsec:ablation}

BlockBatch depends on three main design choices: the synchronization threshold, the refresh interval, and the set of block sizes used for parallel exploration. These parameters control the central tradeoff of the method: allowing enough branch diversity to discover efficient trajectories while limiting computational overhead.

\paragraph{Synchronization Threshold.}
The synchronization threshold determines how far the leading branch is allowed to advance before a lagging branch is synchronized to it (See Appendix~\ref{app:threshold_ablations}, Tab.~\ref{tab:sync_threshold}). This threshold is important because it controls the balance between exploration and stability. If synchronization is triggered too aggressively, branches are forced to follow the current leader too early, falling back to a greedy regime that does not guarantee a globally optimal trajectory.

On the other hand, if the threshold is too large, lagging branches can remain trapped in slow or stale block-denoising regions. In the KV-space view, these branches may continue accumulating local cache drift without contributing useful progress. Synchronization therefore acts as a branch-space contraction mechanism: it preserves useful exploration within a bounded window while preventing slow branches from wasting computation after they have fallen too far behind. The ablation results show that the best performance is obtained at an intermediate threshold, where branch diversity is maintained but unproductive drift is controlled.

\paragraph{Refresh Interval.}
The refresh interval $R$ controls how often a full-sequence refresh is applied. (See appendix \ref{app:threshold_ablations}, table \ref{tab:refresh_interval}) From the refresh contraction bound, stability requires $\beta(1+\lambda_b)^R < 1$, which implies $R < \frac{\log(1/\beta)}{\log(1+\lambda_b)}$. Thus, $R$ cannot be too large: if refresh is too infrequent, accumulated block-denoise error can dominate the contraction effect of full-sequence refresh. However, $R$ also cannot be too small in practice, because frequent full-sequence refreshes increase model-call overhead and end-to-end latency. Therefore, the refresh interval must lie in a feasible window: small enough to control KV-cache drift, but large enough to amortize the cost of global recomputation. The ablation results in Tab.~\ref{tab:refresh_interval} confirm this behavior, showing that the optimal refresh setting is neither overly frequent nor overly sparse.

\paragraph{Block-Size Configuration.}
The choice of block sizes determines the diversity of decoding trajectories explored by BlockBatch (Shown in Fig.~\ref{fig:block_combo_llada_humaneval}). Using multiple block sizes allows the method to exploit idle GPU capacity by evaluating several denoising schedules in parallel. Smaller blocks provide conservative, locally stable progress, while larger blocks can decode more aggressively when the model is confident. Combining them increases the probability that at least one branch follows an efficient trajectory within the synchronization window.

\begin{figure}[t]
    \centering
    \includegraphics[width=1\linewidth]{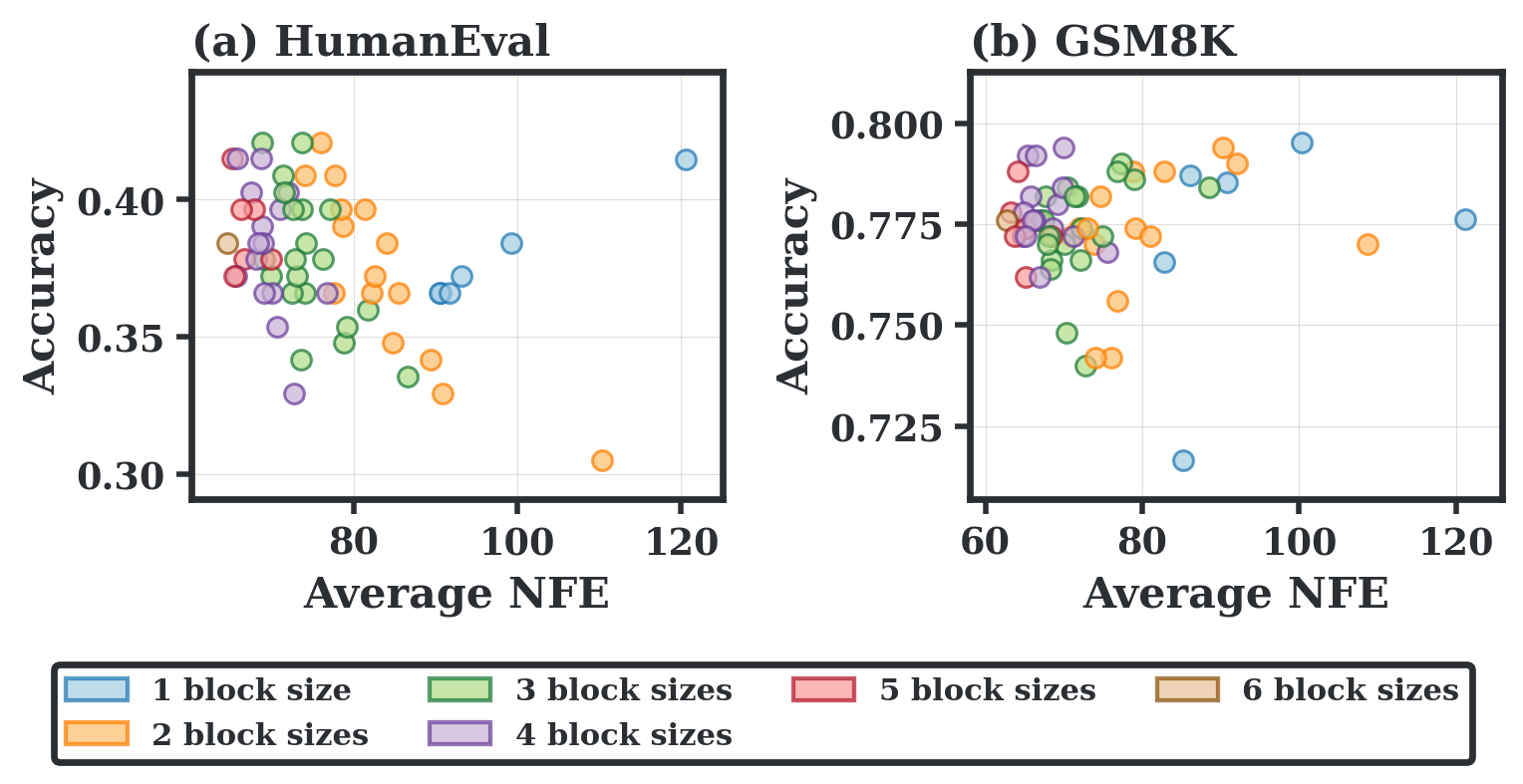}
    \vspace{-0.5em}
    \caption{Each point represents one LLaDA BlockBatch candidate block-size set \(S \subseteq \mathcal{B}\), where \(\mathcal{B}=\{4,8,16,32,64,128\}\). Colors encode the set cardinality \(|S|\). Thus, the singleton group contains \(\binom{6}{1}=6\) fixed block-size configurations, the two-block-size group contains \(\binom{6}{2}=15\) pairwise configurations, and in general the \(k\)-block-size group contains \(\binom{6}{k}\) candidate sets.}
    \label{fig:block_combo_llada_humaneval}
\end{figure}

However, adding more branches is not free. Although more block sizes can reduce NFE by increasing exploration and enabling useful synchronization, each additional branch also increases batching, cache-management, and refresh overhead. Therefore, the best block-size configuration must balance NFE reduction against end-to-end latency. The ablation results in Fig.~\ref{fig:block_combo_llada_humaneval} show that multi-scale block-size sets are more effective than narrow configurations, but also that the benefit saturates once the additional branches no longer provide enough useful trajectory diversity to justify their overhead.
\section{Conclusion}

We presented BlockBatch, a training-free framework that treats block size as a branching axis for dLLM inference. By coordinating multiple block-size branches through confidence-gated merging, leader-based synchronization, and periodic full-sequence refresh within a single fused forward pass, BlockBatch consistently reduces NFEs and latency on LLaDA and Dream across four benchmarks while preserving accuracy over state-of-the-art dLLM inference frameworks. Block-size diversity is a practical, underexplored axis for branch-parallel diffusion decoding.

\section{Limitations}
\label{section:limitations}

BlockBatch is designed as a training-free inference-time acceleration method,
and its effectiveness depends on a decoding regime in which block-size diversity
produces compatible but non-identical trajectories. In low-diversity regimes,
such as near-deterministic or extremely easy prompts where all selected block
sizes converge to the same decoded sequence with little or no bifurcation,
additional branches contribute limited new information while still introducing
coordination overhead. Conversely, BlockBatch also assumes that the selected
block sizes are individually stable enough to provide useful candidate
trajectories. If a model becomes unreliable under large block sizes, such
branches may reduce accuracy and should be adaptively deselected during
inference rather than retained throughout the full decoding process.







\bibliography{custom}

\appendix

\section*{Appendix}
\begin{itemize}
\item Sec.~\ref{app:threshold_ablations}: Ablation Tables
\item Sec.~\ref{app:token-level}: Token level characterization
\begin{itemize}
\item \hyperref[app:later-stage-consensus]{Later-Stage Consensus Token Analysis}
\item \hyperref[app:tokenbifurcationCaseStudies]{Token Bifurcation Case studies}
\end{itemize}
\item Sec.~\ref{app:kv-cache-vector-space}: KV level characterization: proofs and more case studies
\begin{itemize}
\item \hyperref[app:basic-operations]{Basic Operations}
\item \hyperref[prop:moreUpdateFromFullRefresh]{Proposition 1: Block Denoise vs.\ Full Refresh}
\item \hyperref[refresh_stablize]{Proposition 2: Periodic Refresh Stabilizes Block Denoising}
\item \hyperref[prop:Branch_bifurcation_tanspace]{Proposition 3: Branch Bifurcation as Tangent-Space Separation}
\item \hyperref[app:kv-cache-case-study]{Case Study: KV Cache Vector Space Properties}
\end{itemize}
\item Sec.~\ref{app:block-batching-algorithm}: Block Batching Algorithm
\begin{itemize}
\item \hyperref[app:fused-block-batching]{Fused Block Batching with a Unified KV Cache}
\item \hyperref[app:merge-sync-policy]{Merge and Synchronization Policy}
\end{itemize}
\item Sec.~\ref{app:reproducibility}: Reproducibility, Artifacts and Computer Details
\begin{itemize}
\item \hyperref[app:scientific-artifacts]{Scientific Artifacts}
\item \hyperref[app:artifact-licenses]{Artifact Licenses and Intended Use}
\item \hyperref[app:evaluation-setup]{Evaluation Setup}
\item \hyperref[app:compute-budget]{Computational Infrastructure and Budget}
\end{itemize}
\end{itemize}

\section{Appendix: Ablation Tables}
\label{app:threshold_ablations}

\begin{table}[H]
\centering
\caption{
Sync-threshold $\tau$ ablation.
\textbf{Bold} indicates the best value within each column.
}
\vspace{-1em}
\label{tab:sync_threshold}
\scriptsize
\setlength{\tabcolsep}{4pt}
\renewcommand{\arraystretch}{1.15}
\resizebox{\linewidth}{!}{
\begin{tabular}{l cc cc cc cc}
\toprule
& \multicolumn{4}{c}{LLaDA 8B-Inst.} & \multicolumn{4}{c}{Dream Base-7B} \\
\cmidrule(lr){2-5} \cmidrule(lr){6-9}
& \multicolumn{2}{c}{GSM8K} & \multicolumn{2}{c}{HumanEval}
& \multicolumn{2}{c}{GSM8K} & \multicolumn{2}{c}{HumanEval} \\
\cmidrule(lr){2-3} \cmidrule(lr){4-5} \cmidrule(lr){6-7} \cmidrule(lr){8-9}
$\tau$ & Acc. & NFE & Acc. & NFE & Acc. & NFE & Acc. & NFE \\
\midrule
4   & 0.773 & \textbf{67.6} & 0.354 & 69.6 & 0.726 & \textbf{138.2} & \textbf{0.494} & 122.7 \\
8   & 0.770 & 68.6 & 0.378 & 69.9 & \textbf{0.732} & 138.7 & \textbf{0.494} & 120.5 \\
16  & \textbf{0.784} & 69.9 & 0.384 & \textbf{69.4} & 0.725 & 139.4 & 0.470 & \textbf{114.6} \\
32  & 0.780 & 70.6 & \textbf{0.415} & 70.5 & 0.719 & 140.6 & 0.476 & 117.3 \\
64  & 0.780 & 73.1 & 0.384 & 71.6 & 0.703 & 143.9 & 0.427 & 116.7 \\
\bottomrule
\end{tabular}
}
\end{table}

\begin{table}[H]
\centering
\caption{
Refresh-interval $R$ ablation.
\textbf{Bold} indicates the best value within each column.
}
\vspace{-1em}
\label{tab:refresh_interval}
\scriptsize
\setlength{\tabcolsep}{4pt}
\renewcommand{\arraystretch}{1.15}
\resizebox{\linewidth}{!}{
\begin{tabular}{l cc cc cc cc}
\toprule
& \multicolumn{4}{c}{LLaDA 8B-Inst.} & \multicolumn{4}{c}{Dream Base-7B} \\
\cmidrule(lr){2-5} \cmidrule(lr){6-9}
& \multicolumn{2}{c}{GSM8K} & \multicolumn{2}{c}{HumanEval}
& \multicolumn{2}{c}{GSM8K} & \multicolumn{2}{c}{HumanEval} \\
\cmidrule(lr){2-3} \cmidrule(lr){4-5} \cmidrule(lr){6-7} \cmidrule(lr){8-9}
$R$ & Acc. & NFE & Acc. & NFE & Acc. & NFE & Acc. & NFE \\
\midrule
4   & 0.778 & \textbf{59.0} & 0.372 & \textbf{56.8} & \textbf{0.780} & 143.0 & \textbf{0.543} & 120.8 \\
8   & 0.784 & 59.9 & 0.378 & 58.9 & \textbf{0.780} & 138.9 & 0.512 & \textbf{110.1} \\
16  & 0.770 & 61.1 & \textbf{0.402} & 62.4 & 0.752 & 137.5 & 0.476 & 111.8 \\
32  & 0.776 & 62.7 & 0.384 & 64.6 & 0.730 & 134.7 & 0.524 & 112.5 \\
64  & \textbf{0.788} & 64.6 & 0.348 & 67.1 & 0.744 & \textbf{134.2} & 0.494 & 112.8 \\
128 & 0.752 & 65.6 & 0.354 & 71.2 & 0.746 & 135.2 & 0.500 & 117.2 \\
\bottomrule
\end{tabular}
}
\end{table}

\section{Appendix: Token-Level Characterization}
\label{app:token-level}

\subsection{Later-Stage Consensus and Bifurcation Tokens}
\label{app:later-stage-consensus}

We further analyze token-level branch behavior by separating two phenomena:
\emph{later-stage consensus}, where previously diverged branches produce the
same token again at a later position, and \emph{bifurcation}, where two
block-size trajectories first commit different tokens after a shared prefix.
These two phenomena play different roles. Later-stage consensus indicates local
token agreement, but does not imply that the full trajectories have re-converged.
Bifurcation tokens, by contrast, mark the positions at which block-size branches
begin to follow distinct conditional generation paths.

Fig.~\ref{fig:category-humaneval-agreement} and
Fig.~\ref{fig:category-gsm8k-agreement} characterize generated token positions
by linguistic category---digit, operator, whitespace, word, and other---and by
cross-block-size agreement. Each stacked bar reports the average number of
positions per sequence in that category, while the shading indicates how many of
the six block-size configurations produced the same token. The overlaid curve
shows the mean agreement score for each category.

A clear domain-specific pattern appears. In HumanEval
(Fig.~\ref{fig:category-humaneval-agreement}), whitespace and operator tokens
achieve the highest agreement and are dominated by $6/6$ agreement, reflecting
the syntactic regularity of code generation. In GSM8K
(Fig.~\ref{fig:category-gsm8k-agreement}), digit tokens are more frequent
(avg.\ $61.2$ positions per sequence, compared with $23.7$ in HumanEval) and
also show high agreement, consistent with the repeated numerical structure of
arithmetic reasoning traces. Across both domains, the highest-agreement
categories are mostly low-semantic-weight tokens: they control formatting,
syntax, or local surface structure rather than the main reasoning content.

\begin{figure}[H]
\centering
\includegraphics[width=0.82\linewidth]{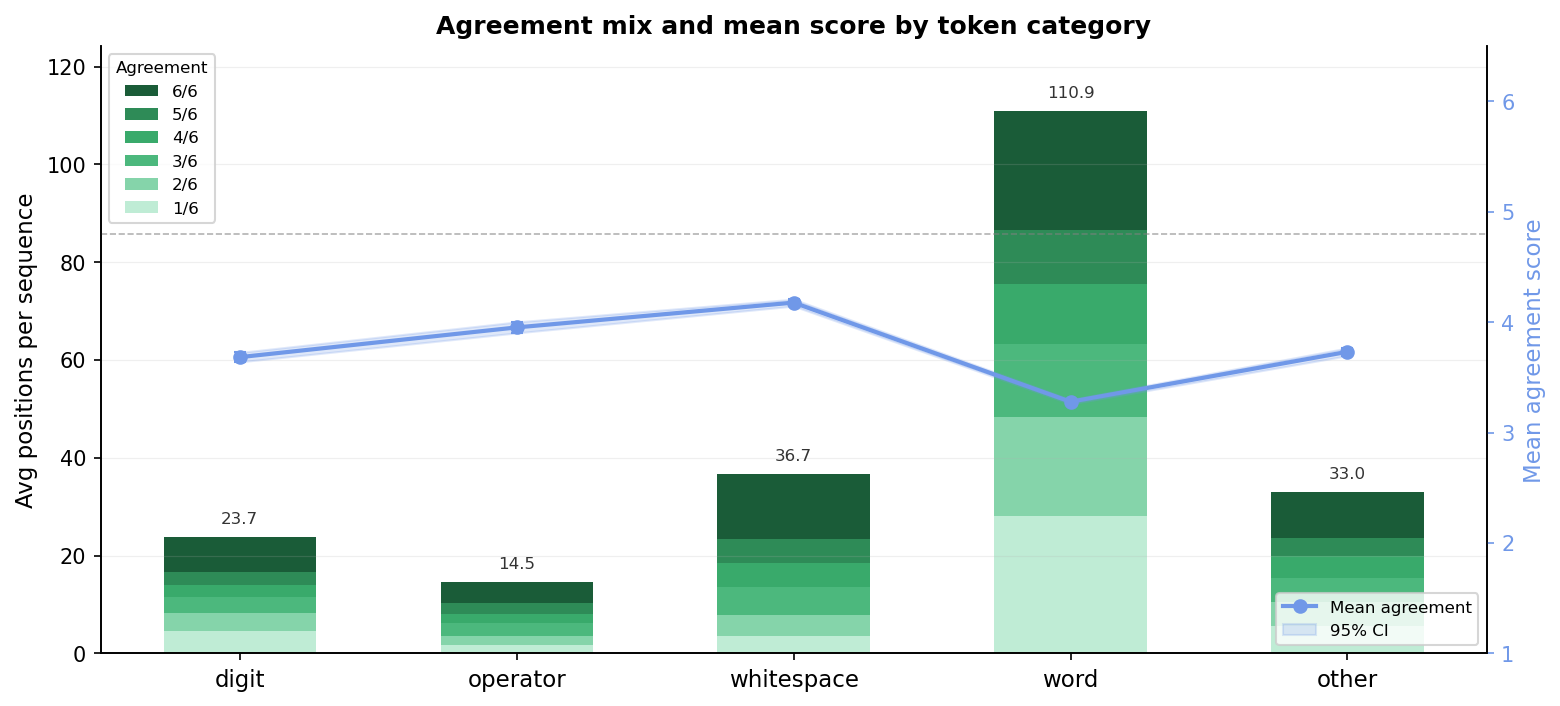}
\vspace{-0.4em}
\caption{
\textbf{Token-category agreement profile for HumanEval.}
Each stacked bar shows the average number of generated-token positions in a
category, shaded by the fraction of block-size configurations that produce the
same token at that position ($6/6$ = darkest green; $1/6$ = lightest). The blue
curve gives the mean agreement score per category. Whitespace and operator
tokens dominate later-stage consensus, reflecting the syntactic regularity of
code generation.
}
\label{fig:category-humaneval-agreement}
\vspace{-0.6em}
\end{figure}
\FloatBarrier

\begin{figure}[H]
\centering
\includegraphics[width=0.82\linewidth]{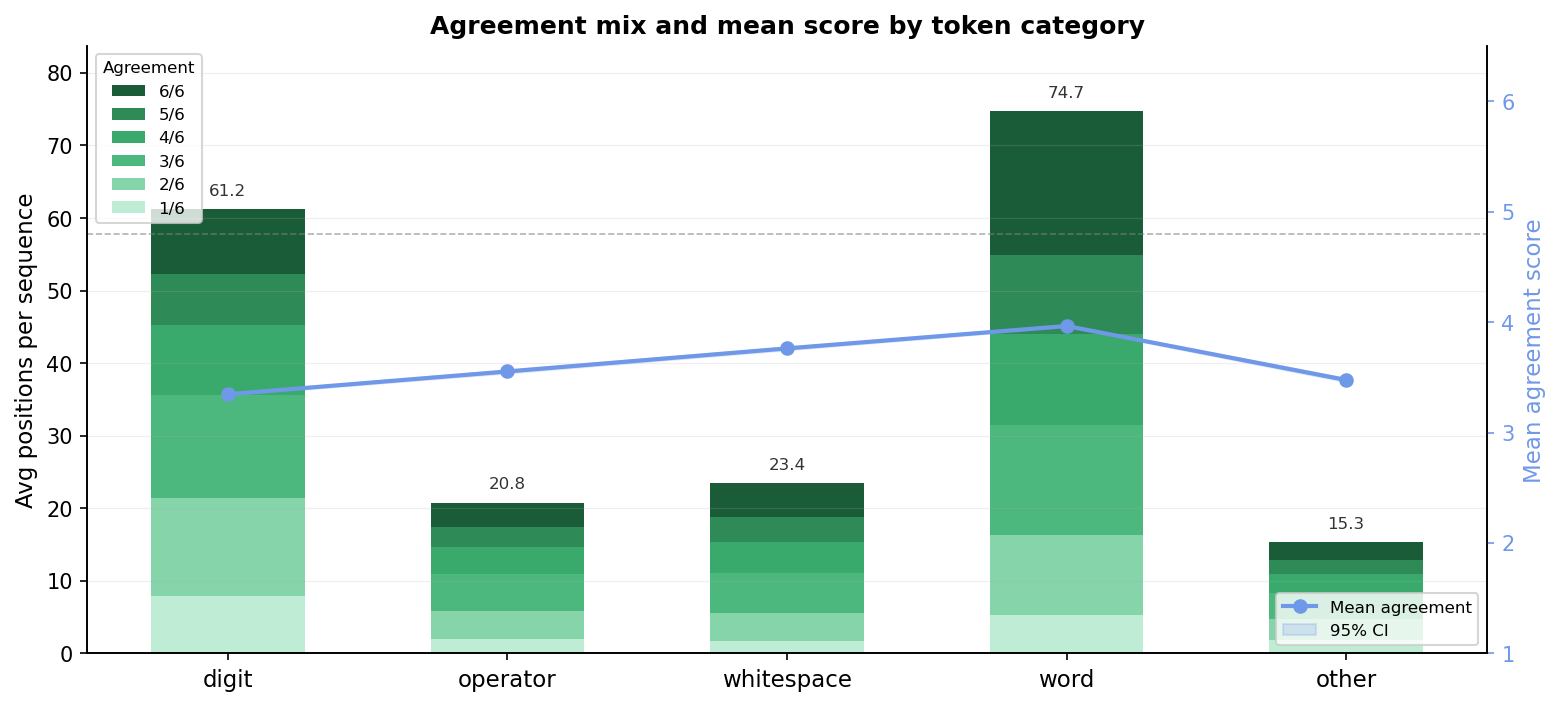}
\vspace{-0.4em}
\caption{
\textbf{Token-category agreement profile for GSM8K.}
Digit tokens are substantially more prevalent than in HumanEval
(avg.\ $61.2$ positions per sequence) and exhibit high mean agreement,
consistent with the repetitive numerical structure of arithmetic reasoning
chains. Word tokens are the most frequent category overall
(avg.\ $74.7$ positions per sequence) but show lower per-position agreement,
indicating that they more commonly mark bifurcation points where block-size
trajectories diverge.
}
\label{fig:category-gsm8k-agreement}
\vspace{-0.6em}
\end{figure}
\FloatBarrier

This category-level pattern explains why later-stage consensus is useful but
limited. Although branches may re-agree on individual tokens after an earlier
split, those tokens are often formatting or locally constrained tokens rather
than semantically decisive reasoning steps. To test whether such tokens can
meaningfully steer generation, we perform a seeded-consensus experiment. For
each sample, we identify token positions with strict $6/6$ cross-block-size
agreement after the initial shared prefix and pre-fill those positions before
normal denoising begins.

\begin{table}[H]
\centering
\scriptsize
\setlength{\tabcolsep}{3.2pt}
\renewcommand{\arraystretch}{1.03}
\caption{
\textbf{Effect of seeding later-stage consensus tokens on Dream-v0-Instruct-7B.}
\emph{Seeded} pre-fills later-stage consensus positions before denoising, where
consensus is defined as $6/6$ cross-block-size agreement after the initial
shared prefix. On average, this seeds $4.6$ tokens per GSM8K sample and $3.5$
tokens per HumanEval sample. $\Delta$Acc. is Seeded minus Baseline; $\Delta$NFE
is Baseline minus Seeded, so positive $\Delta$NFE indicates reduced denoising
cost.
}
\label{tab:seeded-consensus}
\resizebox{\linewidth}{!}{%
\begin{tabular}{llrrrrrr}
\toprule
\multirow{2}{*}{Task}
& \multirow{2}{*}{Block size}
& \multicolumn{3}{c}{Accuracy (\%)}
& \multicolumn{3}{c}{Average NFE} \\
\cmidrule(lr){3-5}
\cmidrule(lr){6-8}
& & Baseline & Seeded & $\Delta$Acc.
  & Baseline & Seeded & $\Delta$NFE \\
\midrule
GSM8K & 4   & 76.04 & 74.75 & $-1.29$ & 154.2 & 152.9 & 1.3 \\
GSM8K & 8   & 76.04 & 74.91 & $-1.13$ & 104.3 & 102.3 & 2.0 \\
GSM8K & 16  & 74.98 & 73.62 & $-1.36$ &  80.0 &  77.5 & 2.5 \\
GSM8K & 32  & 73.69 & 73.24 & $-0.45$ &  67.9 &  65.4 & 2.5 \\
GSM8K & 64  & 70.58 & 70.13 & $-0.45$ &  64.8 &  61.6 & 3.2 \\
GSM8K & 128 & 41.77 & 42.61 & $+0.84$ &  81.8 &  79.0 & 2.8 \\
\midrule
HumanEval & 4   & 54.88 & 50.00 & $-4.88$ & 174.3 & 173.7 & 0.6 \\
HumanEval & 8   & 55.49 & 54.27 & $-1.22$ & 129.0 & 125.3 & 3.7 \\
HumanEval & 16  & 55.49 & 55.49 & $+0.00$ & 102.6 & 100.1 & 2.5 \\
HumanEval & 32  & 55.49 & 56.71 & $+1.22$ &  86.4 &  85.8 & 0.6 \\
HumanEval & 64  & 53.05 & 52.44 & $-0.61$ &  82.7 &  81.9 & 0.8 \\
HumanEval & 128 & 33.54 & 34.15 & $+0.61$ &  67.0 &  64.1 & 2.9 \\
\bottomrule
\end{tabular}%
}
\vspace{-0.6em}
\end{table}
\FloatBarrier

Table~\ref{tab:seeded-consensus} shows that seeding later-stage consensus
tokens produces only small NFE reductions and does not systematically improve
accuracy. On average, only $4.6$ GSM8K tokens and $3.5$ HumanEval tokens per
sample satisfy the strict post-prefix $6/6$ agreement criterion. This supports
the view that later-stage consensus tokens are locally reusable but usually not
strong enough to determine the global denoising trajectory. Therefore, the
semantically important variation between branches is better captured by where
they first bifurcate, rather than by the small number of tokens on which they
later re-agree.

We therefore measure the average bifurcation length between every pair of
single-block-size generations. For a pair of block sizes, bifurcation length is
the length of the common generated-token prefix before the two outputs first
differ. Smaller values indicate earlier trajectory separation.

\begin{table}[H]
\centering
\tiny
\setlength{\tabcolsep}{1.8pt}
\renewcommand{\arraystretch}{1.0}
\caption{
\textbf{Average bifurcation length between single-block-size generations.}
For each pair of block sizes, bifurcation length is defined as the length of the
common generated-token prefix before the two outputs first differ, following
\texttt{analyze\_bifurcation.py}. When the two outputs have different lengths,
the shorter generated sequence is used. Lower values indicate earlier
trajectory divergence. Bold entries denote the earliest bifurcation within each
model--dataset row.
}
\label{tab:bifurcate-length}
\resizebox{\linewidth}{!}{%
\begin{tabular}{llrrrrrrrrrrrrrrr}
\toprule
\multirow{2}{*}{Model}
& \multirow{2}{*}{Dataset}
& \multicolumn{15}{c}{Block-size pair} \\
\cmidrule(lr){3-17}
& & 4--8 & 4--16 & 4--32 & 4--64 & 4--128
& 8--16 & 8--32 & 8--64 & 8--128
& 16--32 & 16--64 & 16--128
& 32--64 & 32--128 & 64--128 \\
\midrule
Dream & GSM8K
& 23.4 & 18.0 & 16.0 & 15.9 & \textbf{15.6}
& 25.5 & 20.4 & 19.0 & 18.5
& 27.1 & 23.3 & 22.1
& 32.1 & 29.8 & 44.3 \\
Dream & HumanEval
& 34.9 & 22.0 & 17.9 & 17.8 & \textbf{16.4}
& 37.1 & 20.9 & 22.6 & 19.0
& 28.1 & 23.3 & 23.1
& 34.5 & 29.6 & 47.2 \\
Dream & MATH
& 32.6 & 23.4 & 22.0 & \textbf{21.8} & 22.0
& 34.1 & 27.5 & 26.5 & 26.2
& 40.3 & 34.5 & 33.4
& 47.5 & 43.1 & 59.9 \\
Dream & MBPP
& 33.3 & 21.6 & \textbf{20.9} & 21.7 & 21.7
& 32.6 & 27.0 & 25.6 & 26.9
& 36.7 & 33.1 & 29.0
& 53.8 & 43.5 & 56.7 \\
\midrule
LLaDA & GSM8K
& 35.2 & 24.8 & 22.0 & 21.2 & \textbf{20.6}
& 32.7 & 26.4 & 25.0 & 23.4
& 34.2 & 28.0 & 25.8
& 38.9 & 33.0 & 48.7 \\
LLaDA & HumanEval
& 64.5 & 48.5 & 42.6 & 31.3 & 32.5
& 52.2 & 39.0 & \textbf{30.0} & 30.5
& 53.7 & 38.0 & 37.4
& 52.6 & 44.0 & 54.3 \\
LLaDA & MATH
& 34.0 & 24.7 & 22.4 & 22.2 & \textbf{22.2}
& 34.5 & 28.0 & 25.4 & 25.5
& 38.0 & 30.6 & 29.7
& 43.4 & 39.0 & 54.7 \\
LLaDA & MBPP
& 31.6 & 21.9 & \textbf{17.9} & 19.1 & 18.9
& 28.3 & 20.8 & 19.5 & 21.1
& 26.7 & 22.3 & 23.3
& 26.2 & 25.5 & 37.7 \\
\bottomrule
\end{tabular}%
}
\vspace{-0.6em}
\end{table}
\FloatBarrier

Table~\ref{tab:bifurcate-length} shows that substantially different block-size
pairs generally bifurcate earlier than nearby block-size pairs. For example,
pairs involving small and large blocks, such as $4$--$128$ or $8$--$128$, often
split earlier than adjacent pairs such as $32$--$64$ or $64$--$128$. This pattern
suggests that block-size diversity is not merely redundant decoding. Different
block sizes expose distinct token-level trajectories, while still often
preserving enough shared structure to support synchronization and selective
merging.

The exact bifurcation position depends on both the dataset and the model.
Code-generation tasks often share long syntactic prefixes before diverging in
implementation details, while arithmetic reasoning tasks can bifurcate through
small changes in wording, intermediate computation, or final-answer formatting.
Table~\ref{tab:gsm8k-token-bifurcation-sample-1}, \ref{tab:gsm8k-token-bifurcation-sample-2}, \ref{tab:gsm8k-token-bifurcation-sample-3} provides case studies using raw output spans. Rather than summarizing each
generation abstractly, the table shows the first visible split after the shared
reasoning prefix and the final answer span for each inference regime.

\FloatBarrier

\subsection{Token Bifurcation Case studies}
\label{app:tokenbifurcationCaseStudies}

This subsection reports three \texttt{GSM8K} examples from
\texttt{LLaDA-8B-Instruct}. For each example, we show the first raw output
bifurcation among the six single block-size branches and the terminal answer
span generated by each method. The bifurcation position is computed after
whitespace normalization for readability. The reported continuation is the
raw branch-local text beginning at the first divergent token.


\newcolumntype{Y}{>{\raggedright\arraybackslash}X}
\newcommand{\rawfrag}[1]{{\ttfamily\scriptsize #1}}
\newcommand{\Yes}{\textsc{Y}}
\newcommand{\No}{\textsc{N}}
\newcommand{\Invalid}{\texttt{invalid}}

\paragraph{\texttt{GSM8K} sample~1.}
Target answer: \(18\). The six single block-size branches share the prefix
\rawfrag{... total number of eggs laid} and first bifurcate at token position~26.

\begin{table}[H]
\centering
\scriptsize
\setlength{\tabcolsep}{2.5pt}
\renewcommand{\arraystretch}{1.08}
\caption{\textbf{Token bifurcation on \texttt{GSM8K} sample~1.}
Several branches compute the correct value \(18\), but strict exact match
depends on whether the terminal span contains the required
\texttt{\#\#\#\#} answer marker.}
\label{tab:gsm8k-token-bifurcation-sample-1}
\begin{tabularx}{\linewidth}{@{}lYYccr@{}}
\toprule
Method & First divergent continuation & Terminal answer span
& Pred. & EM & NFE \\
\midrule
Vanilla
& \rawfrag{by Janet's ducks per day}
& \rawfrag{farmers' market. \#\#\#\# 18}
& \(18\) & \Yes & 256 \\

BS=4
& \rawfrag{by Janet's ducks per day}
& \rawfrag{The final answer is: \textbackslash{}boxed\{18\}}
& \Invalid & \No & 154 \\

BS=8
& \rawfrag{by the ducks per day}
& \rawfrag{farmers' market. \#\#\#\# 18}
& \(18\) & \Yes & 120 \\

BS=16
& \rawfrag{by Janet's ducks per day}
& \rawfrag{farmers' market. \#\#\#\# 18}
& \(18\) & \Yes & 121 \\

BS=32
& \rawfrag{by Janet's ducks per day}
& \rawfrag{The final answer is: \textbackslash{}boxed\{18\}}
& \Invalid & \No & 123 \\

BS=64
& \rawfrag{per day. 2. Subtract}
& \rawfrag{farmers' market. \#\#\#\# 18}
& \(18\) & \Yes & 111 \\

BS=128
& \rawfrag{by Janet's ducks per day}
& \rawfrag{Janet makes \$18 every day at the farmers' market.}
& \Invalid & \No & 105 \\

BlockBatch
& \rawfrag{by Janet's ducks per day}
& \rawfrag{farmers' market. \#\#\#\# 24}
& \(24\) & \No & 557 \\
\bottomrule
\end{tabularx}
\end{table}
\FloatBarrier

\paragraph{\texttt{GSM8K} sample~2.}
Target answer: \(3\). The six single block-size branches share the prefix
\rawfrag{... robe takes 2 bolts of blue fiber and half that} and first
bifurcate at token position~11.

\begin{table}[H]
\centering
\scriptsize
\setlength{\tabcolsep}{2.5pt}
\renewcommand{\arraystretch}{1.08}
\caption{\textbf{Token bifurcation on \texttt{GSM8K} sample~2.}
Most branches preserve the correct arithmetic trajectory. Failures are mainly
strict-format failures caused by missing terminal \texttt{\#\#\#\#} extraction.}
\label{tab:gsm8k-token-bifurcation-sample-2}
\begin{tabularx}{\linewidth}{@{}lYYccr@{}}
\toprule
Method & First divergent continuation & Terminal answer span
& Pred. & EM & NFE \\
\midrule
Vanilla
& \rawfrag{much white fiber. To find}
& \rawfrag{total of 3 bolts. \#\#\#\# 3}
& \(3\) & \Yes & 256 \\

BS=4
& \rawfrag{much white fiber. First, calculate}
& \rawfrag{it takes is 3. \#\#\#\# 3}
& \(3\) & \Yes & 110 \\

BS=8
& \rawfrag{much white fiber. So, the}
& \rawfrag{= 3 \textbackslash{}text\{ bolts\} \textbackslash{}] \#\#\#\# 3}
& \(3\) & \Yes & 78 \\

BS=16
& \rawfrag{amount of white fiber. So,}
& \rawfrag{= 3 \textbackslash{}text\{ bolts\} \textbackslash{}] \#\#\#\# 3}
& \(3\) & \Yes & 66 \\

BS=32
& \rawfrag{amount of white fiber. First,}
& \rawfrag{robe takes a total of 3 bolts.}
& \Invalid & \No & 76 \\

BS=64
& \rawfrag{amount of white fiber. First,}
& \rawfrag{2 bolts + 1 bolt = 3 bolts. \#\#\#\# 3}
& \(3\) & \Yes & 87 \\

BS=128
& \rawfrag{amount of white fiber.}
& \rawfrag{blue fiber and half that amount of white fiber.}
& \Invalid & \No & 77 \\

BlockBatch
& \rawfrag{amount of white fiber. Half}
& \rawfrag{white fiber = 3 bolts. \#\#\#\# 3}
& \(3\) & \Yes & 192 \\
\bottomrule
\end{tabularx}
\end{table}
\FloatBarrier

\paragraph{\texttt{GSM8K} sample~3.}
Target answer: \(70{,}000\). The six single block-size branches share the
prefix \rawfrag{... need to follow these steps: 1. Calculate the} and first
bifurcate at token position~13.

\begin{table}[H]
\centering
\scriptsize
\setlength{\tabcolsep}{2.5pt}
\renewcommand{\arraystretch}{1.08}
\caption{\textbf{Token bifurcation on \texttt{GSM8K} sample~3.}
This example shows a semantic bifurcation. Branches that continue with
\rawfrag{total cost} are more likely to preserve the repair-cost term, while
branches that continue with \rawfrag{new value} often omit it and terminate
near \(120{,}000\).}
\label{tab:gsm8k-token-bifurcation-sample-3}
\begin{tabularx}{\linewidth}{@{}lYYccr@{}}
\toprule
Method & First divergent continuation & Terminal answer span
& Pred. & EM & NFE \\
\midrule
Vanilla
& \rawfrag{new value of the house}
& \rawfrag{Therefore, Josh's profit is \$120,000.}
& \Invalid & \No & 256 \\

BS=4
& \rawfrag{total cost of the house}
& \rawfrag{incurred a loss. \#\#\#\# -10,000}
& \(-10{,}000\) & \No & 133 \\

BS=8
& \rawfrag{total cost of the house}
& \rawfrag{Josh's profit is \$70,000. \#\#\#\# 70000}
& \(70000\) & \Yes & 96 \\

BS=16
& \rawfrag{new value of the house}
& \rawfrag{Josh made a profit of \$120,000.}
& \Invalid & \No & 106 \\

BS=32
& \rawfrag{new value of the house}
& \rawfrag{Josh's profit is \$120,000.}
& \Invalid & \No & 96 \\

BS=64
& \rawfrag{new value of the house}
& \rawfrag{Josh made a profit of \$10,000.}
& \Invalid & \No & 123 \\

BS=128
& \rawfrag{new value of the house}
& \rawfrag{Therefore, Josh's profit is \$120,000.}
& \Invalid & \No & 101 \\

BlockBatch
& \rawfrag{total cost of the house}
& \rawfrag{Josh made a profit of \$100,000.}
& \Invalid & \No & 330 \\
\bottomrule
\end{tabularx}
\end{table}
\FloatBarrier

\paragraph{Takeaway.}
Across the three examples, the first bifurcation occurs early: token
positions~26, 11, and~13 after whitespace normalization. Samples~1 and~2 show
mostly formatting bifurcation: several branches compute the correct terminal
number but fail strict exact match because they emit \rawfrag{\textbackslash{}boxed\{\}}
or omit the \rawfrag{\#\#\#\#} marker. Sample~3 shows reasoning bifurcation:
the first divergent continuation already separates branches that track total
cost from branches that focus on new value, and this early split determines
whether the repair cost is preserved in the terminal answer. Therefore, raw
token bifurcation is useful for distinguishing extraction artifacts from
genuine reasoning drift. BlockBatch benefits from branch diversity only when
the selected terminal branch preserves the correct arithmetic trajectory and
emits an extractor-compatible final answer.
\section{Appendix: KV Cache Vector Space Formulation}
\label{app:kv-cache-vector-space}
\subsection{Basic Operations}
\label{app:basic-operations}

We model each branch cache as a fixed vectorization of all key and value tensors
across layers, heads, and sequence positions. Thus, at event step \(t\), branch
\(b\) has
$$
K_t^{(b)}\in\mathbb{R}^{D}.
$$
All norms below are Euclidean norms on this fixed vectorization. The event index
\(t\) ranges over block-denoise and full-refresh events.

Let
$$
x_t^{(b)}\in\widetilde{\mathcal V}^{L}
$$
be the current token state, where \(\widetilde{\mathcal V}\) contains the
vocabulary and the mask token. Let
$$
F(x_t^{(b)})\in\mathbb{R}^{D}
$$
denote the full-sequence KV-cache map obtained by recomputing the cache from
the complete current sequence.

We distinguish two cache-update operations.

\begin{enumerate}
\item \textbf{Full-sequence refresh.}
A full refresh recomputes the cache from the complete current token state:
$$
K_{t+1}^{(b)}=F(x_t^{(b)}).
$$
The corresponding cache correction is
$$
\Delta_{\mathrm{full},t}^{(b)}
=
F(x_t^{(b)})-K_t^{(b)}.
$$

\item \textbf{Block denoise.}
A block-denoise step updates an active block
\(B_t^{(b)}\subseteq\{1,\ldots,L\}\), with
\(|B_t^{(b)}|=s_b\), while reusing cached context outside the block:
$$
K_{t+1}^{(b)}
=
K_t^{(b)}
+
\Delta_{\mathrm{blk},t}^{(b)}.
$$
\end{enumerate}

This appendix analyzes how the active block size affects the expected
magnitude of the local KV correction. Synchronization and branch selection are
policy operations in BlockBatch and are not needed for the following KV-norm
bounds.

\subsection{Proposition 1: Expected Block-Local KV Movement}
\label{prop:moreUpdateFromFullRefresh}

The following result formalizes the average-case locality of block denoising.
The \(\sqrt{m/L}\) factor does not hold for every fixed block and every fixed
vector. It holds in expectation when a block of size \(m\) covers an \(m/L\)
fraction of positions on average.

\paragraph{Positional KV-energy notation.}
For a flattened KV-cache perturbation \(v\in\mathbb{R}^{D}\), write
$$
v=(v_1,\ldots,v_L),
$$
where \(v_p\) contains all key/value coordinates associated with sequence
position \(p\), across all layers, heads, and key/value channels. For a block
\(B\subseteq\{1,\ldots,L\}\), let \(P_B\) be the orthogonal projection that
keeps coordinates associated with positions in \(B\) and zeros all other
coordinates. Then
$$
\|P_Bv\|_2^2
=
\sum_{p\in B}\|v_p\|_2^2.
$$
Here ``energy'' means squared Euclidean norm of the KV perturbation.

\begin{lemma}[Expected block-projection energy]
Let \(B\) be a random block of size \(m\) such that every position has the same
inclusion probability:
$$
\Pr(p\in B)=\frac{m}{L},
\qquad p=1,\ldots,L.
$$
Then, for any fixed KV perturbation \(v\in\mathbb{R}^{D}\),
$$
\mathbb{E}_B\left[\|P_Bv\|_2^2\right]
=
\frac{m}{L}\|v\|_2^2.
$$
Consequently,
$$
\mathbb{E}_B\left[\|P_Bv\|_2\right]
\leq
\sqrt{\frac{m}{L}}\|v\|_2.
$$
\end{lemma}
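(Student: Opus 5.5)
The plan is to write the projected energy as a sum of indicator-weighted positional energies, take expectations using linearity, and then pass from the second moment to the first moment via Jensen's inequality.

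\textbf{Step 1 (the equality).} By the positional decomposition introduced above, for any realization of the block \(B\) we have
\[
\|P_Bv\|_2^2=\sum_{p=1}^{L}\mathbf{1}[p\in B]\,\|v_p\|_2^2 .
\]
Since \(v\) is fixed and only \(B\) is random, the weights \(\|v_p\|_2^2\) are deterministic constants. Taking \(\mathbb{E}_B\) and using linearity gives
\[
\mathbb{E}_B\|P_Bv\|_2^2=\sum_p \Pr(p\in B)\,\|v_p\|_2^2 .
\]
The hypothesis \(\Pr(p\in B)=m/L\) for every \(p\) then lets us factor out \(m/L\). The remaining sum is \(\sum_p\|v_p\|_2^2=\|v\|_2^2\), because the positional blocks \(v_1,\ldots,v_L\) partition the coordinates of \(v\) (equivalently, \(P_{\{1,\ldots,L\}}\) is the identity). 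Note that no independence between positions is needed; only the marginal inclusion probabilities enter. The size constraint \(|B|=m\) is consistent with these marginals, since \(\sum_p\Pr(p\in B)=\mathbb{E}|B|=m\).

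\textbf{Step 2 (the consequence).} Apply Jensen's inequality to the concave map \(u\mapsto\sqrt{u}\) with \(u=\|P_Bv\|_2^2\ge 0\). Equivalently, use Cauchy--Schwarz in the form \(\mathbb{E}[Z]\le\sqrt{\mathbb{E}[Z^2]}\) with \(Z=\|P_Bv\|_2\). This gives
\[
\mathbb{E}_B\|P_Bv\|_2\le\sqrt{\mathbb{E}_B\|P_Bv\|_2^2}=\sqrt{m/L}\,\|v\|_2 .
\]

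\textbf{Expected difficulty.} No step is genuinely hard. The only point needing care is to state clearly that the expectation is over \(B\) alone, with \(v\) held fixed. If \(v\) were allowed to depend on \(B\) (as the actual block update \(\Delta_{\mathrm{blk}}\) does), the linearity step would fail. This is why the statement is phrased for a fixed perturbation, and why the paper cautions that the \(\sqrt{m/L}\) factor is an average-case statement rather than a worst-case one.
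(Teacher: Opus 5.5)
Your proposal is correct and follows the paper's proof step for step: you write $\|P_Bv\|_2^2$ as an indicator-weighted sum of positional energies, apply linearity of expectation with the marginals $\Pr(p\in B)=m/L$, and then use Jensen's inequality for the first-moment bound. Your side remarks, that only the marginal inclusion probabilities matter and that $v$ must not depend on $B$, are accurate but are not needed for the argument.
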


\begin{proof}
By definition,
$$
\|P_Bv\|_2^2
=
\sum_{p=1}^{L}\mathbf{1}\{p\in B\}\|v_p\|_2^2.
$$
Taking expectation over \(B\),
$$
\begin{aligned}
\mathbb{E}_B\left[\|P_Bv\|_2^2\right]
&=
\sum_{p=1}^{L}
\mathbb{E}_B[\mathbf{1}\{p\in B\}]
\|v_p\|_2^2 \\
&=
\sum_{p=1}^{L}
\Pr(p\in B)\|v_p\|_2^2 \\
&=
\frac{m}{L}\sum_{p=1}^{L}\|v_p\|_2^2 \\
&=
\frac{m}{L}\|v\|_2^2.
\end{aligned}
$$
By Jensen's inequality,
$$
\mathbb{E}_B[\|P_Bv\|_2]
\leq
\sqrt{
\mathbb{E}_B[\|P_Bv\|_2^2]
}
=
\sqrt{\frac{m}{L}}\|v\|_2.
$$
\end{proof}

\begin{proposition}[Expected block-local KV update bound]
Let \(m=|B_t^{(b)}|\) and let \(L\) be the sequence length. Consider the
full-refresh correction
$$
\Delta_{\mathrm{full},t}^{(b)}
=
F(x_t^{(b)})-K_t^{(b)}.
$$
Decompose the block-denoise correction as
$$
\Delta_{\mathrm{blk},t}^{(b)}
=
P_{B_t^{(b)}}\Delta_{\mathrm{full},t}^{(b)}
+
r_t^{(b)}.
$$
The residual \(r_t^{(b)}\) is defined by this equation and captures the part of
the block-denoise correction that is not explained by the block-restricted
full-refresh correction.

If the active block satisfies
$$
\Pr(p\in B_t^{(b)})=\frac{m}{L},
\qquad p=1,\ldots,L,
$$
then
$$
\mathbb{E}_{B}
\left[
\left\|P_{B_t^{(b)}}\Delta_{\mathrm{full},t}^{(b)}\right\|_2^2
\right]
=
\frac{m}{L}
\left\|\Delta_{\mathrm{full},t}^{(b)}\right\|_2^2.
$$
Therefore,
$$
\mathbb{E}_{B}
\left[
\left\|P_{B_t^{(b)}}\Delta_{\mathrm{full},t}^{(b)}\right\|_2
\right]
\leq
\sqrt{\frac{m}{L}}
\left\|\Delta_{\mathrm{full},t}^{(b)}\right\|_2.
$$
With the residual included,
$$
\mathbb{E}_{B}
\left[
\left\|\Delta_{\mathrm{blk},t}^{(b)}\right\|_2
\right]
\leq
\sqrt{\frac{m}{L}}
\left\|\Delta_{\mathrm{full},t}^{(b)}\right\|_2
+
\mathbb{E}_{B}\left[\left\|r_t^{(b)}\right\|_2\right].
$$
\end{proposition}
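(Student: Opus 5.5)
The plan is to reduce the proposition directly to the preceding Lemma (expected block-projection energy), applied to the single vector \(v=\Delta_{\mathrm{full},t}^{(b)}\). The only subtlety is that the lemma requires \(v\) to be \emph{fixed}, i.e.\ independent of the random block \(B\). So the first step is to make explicit that the expectation \(\mathbb{E}_B\) is taken conditionally on the current state \((x_t^{(b)},K_t^{(b)})\). Under this conditioning, \(\Delta_{\mathrm{full},t}^{(b)}=F(x_t^{(b)})-K_t^{(b)}\) is a deterministic element of \(\mathbb{R}^D\), and only the active block \(B_t^{(b)}\) is random, with uniform inclusion probability \(m/L\) by hypothesis.

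With that in place, the first two displays follow immediately. Invoking the Lemma with \(v=\Delta_{\mathrm{full},t}^{(b)}\) gives
\[
\mathbb{E}_B\bigl[\|P_{B_t^{(b)}}\Delta_{\mathrm{full},t}^{(b)}\|_2^2\bigr]=\tfrac{m}{L}\|\Delta_{\mathrm{full},t}^{(b)}\|_2^2 ,
\]
which rests on linearity of expectation over the positional decomposition \(\|P_Bv\|_2^2=\sum_p \mathbf{1}\{p\in B\}\|v_p\|_2^2\). Jensen's inequality for the concave square root, already carried out in the Lemma, then yields the first-moment bound with factor \(\sqrt{m/L}\).

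For the final display, I would use the defining decomposition \(\Delta_{\mathrm{blk},t}^{(b)}=P_{B_t^{(b)}}\Delta_{\mathrm{full},t}^{(b)}+r_t^{(b)}\). The triangle inequality gives, pointwise in \(B\),
\[
\|\Delta_{\mathrm{blk},t}^{(b)}\|_2\le\|P_{B_t^{(b)}}\Delta_{\mathrm{full},t}^{(b)}\|_2+\|r_t^{(b)}\|_2 .
\]
Taking \(\mathbb{E}_B\) of both sides (monotonicity and linearity) and substituting the bound from the previous step gives the stated inequality. Since \(r_t^{(b)}\) is defined by the decomposition itself, no assumption on it is needed. The only requirement is that \(\mathbb{E}_B\|r_t^{(b)}\|_2\) be finite, or else the bound holds trivially. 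This is automatic because \(B\) ranges over finitely many subsets of \(\{1,\dots,L\}\).

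I expect no real analytic obstacle. The main point of care is the conditioning and independence issue from the first step: if \(B_t^{(b)}\) were chosen adaptively from \(\Delta_{\mathrm{full},t}^{(b)}\), the lemma would not apply. I will therefore state explicitly that the uniform-inclusion hypothesis is a hypothesis on the conditional law of \(B\) given the current state.
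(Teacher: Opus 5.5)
Your proposal is correct and matches the paper's proof: apply the block-projection lemma to $v=\Delta_{\mathrm{full},t}^{(b)}$, use Jensen for the $\sqrt{m/L}$ first-moment bound, then apply the triangle inequality pointwise in $B$ to the defining decomposition and take $\mathbb{E}_B$. Your remark that $\mathbb{E}_B$ must be read conditionally on $(x_t^{(b)},K_t^{(b)})$, so that $\Delta_{\mathrm{full},t}^{(b)}$ is fixed and only $B_t^{(b)}$ is random, makes explicit a point the paper leaves implicit, but it does not change the argument.
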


\begin{proof}
Apply the expected block-projection lemma to
\(v=\Delta_{\mathrm{full},t}^{(b)}\):
$$
\mathbb{E}_{B}
\left[
\left\|P_{B_t^{(b)}}\Delta_{\mathrm{full},t}^{(b)}\right\|_2^2
\right]
=
\frac{m}{L}
\left\|\Delta_{\mathrm{full},t}^{(b)}\right\|_2^2.
$$
Jensen's inequality gives
$$
\mathbb{E}_{B}
\left[
\left\|P_{B_t^{(b)}}\Delta_{\mathrm{full},t}^{(b)}\right\|_2
\right]
\leq
\sqrt{\frac{m}{L}}
\left\|\Delta_{\mathrm{full},t}^{(b)}\right\|_2.
$$
Using
$$
\Delta_{\mathrm{blk},t}^{(b)}
=
P_{B_t^{(b)}}\Delta_{\mathrm{full},t}^{(b)}
+
r_t^{(b)}
$$
and the triangle inequality,
$$
\left\|\Delta_{\mathrm{blk},t}^{(b)}\right\|_2
\leq
\left\|P_{B_t^{(b)}}\Delta_{\mathrm{full},t}^{(b)}\right\|_2
+
\left\|r_t^{(b)}\right\|_2.
$$
Taking expectation over \(B\) yields
$$
\mathbb{E}_{B}
\left[
\left\|\Delta_{\mathrm{blk},t}^{(b)}\right\|_2
\right]
\leq
\sqrt{\frac{m}{L}}
\left\|\Delta_{\mathrm{full},t}^{(b)}\right\|_2
+
\mathbb{E}_{B}\left[\left\|r_t^{(b)}\right\|_2\right].
$$
\end{proof}

\paragraph{Comparison with full refresh.}
A full refresh applies the complete correction
$$
\Delta_{\mathrm{full},t}^{(b)}
=
F(x_t^{(b)})-K_t^{(b)}.
$$
A block-denoise step applies only the active-block component of this correction,
up to residual effects. Therefore, if
$$
\mathbb{E}_{B}\left[\left\|r_t^{(b)}\right\|_2\right]
<
\left(1-\sqrt{\frac{m}{L}}\right)
\left\|\Delta_{\mathrm{full},t}^{(b)}\right\|_2,
$$
then
$$
\mathbb{E}_{B}
\left[
\left\|\Delta_{\mathrm{blk},t}^{(b)}\right\|_2
\right]
<
\left\|\Delta_{\mathrm{full},t}^{(b)}\right\|_2.
$$

\paragraph{Interpretation.}
This proposition uses one averaging condition: each token position is included
in an active block with probability \(m/L\). Under that condition, the direct
block-local component contains an \(m/L\) fraction of squared KV-update energy
in expectation. The norm therefore scales by \(\sqrt{m/L}\). The residual
\(r_t^{(b)}\) covers effects not captured by a pure projection of the
full-refresh correction, such as attention spillover outside the active block.
Thus the result gives an average-case explanation for why block-denoise events
produce smaller KV movements than full-sequence refreshes, without requiring
extra concentration, smoothness, or interpolation assumptions.

\subsection{Proposition 2: Periodic Refresh Stabilizes Block Denoising}
\label{refresh_stablize}

Let
$$
F:\widetilde{\mathcal V}^{L}\rightarrow \mathbb{R}^{D}
$$
denote the full-sequence KV-cache map on discrete token states. The reachable
full-refresh cache set is
$$
\mathcal{C}
=
\{F(x):x\in\widetilde{\mathcal V}^{L}\}.
$$
We call \(\mathcal{C}\) the reachable full-refresh cache set rather than a
smooth manifold, since \(\widetilde{\mathcal V}^{L}\) is discrete.

For branch \(b\), define the ideal cache associated with its current token
state:
$$
K_{\star,t}^{(b)}
=
F(x_t^{(b)}),
$$
and define the cache-consistency error:
$$
E_t^{(b)}
=
\left\lVert
K_t^{(b)}-K_{\star,t}^{(b)}
\right\rVert_2
=
\left\lVert
K_t^{(b)}-F(x_t^{(b)})
\right\rVert_2.
$$
This measures the deviation between the maintained approximate cache and the
cache obtained by a full forward pass on the same token state.

\begin{proposition}[Refresh-stabilized cache error under bounded drift]
Suppose each block-denoise step satisfies
$$
E_{t+1}^{(b)}
\leq
(1+\lambda_b)E_t^{(b)}+\epsilon_B,
\qquad
\lambda_b\geq0,\quad \epsilon_B\geq0,
$$
and each full refresh satisfies
$$
E_{t+1}^{(b)}
\leq
\beta E_t^{(b)}+\epsilon_F,
\qquad
0\leq\beta<1,\quad \epsilon_F\geq0.
$$
For an exact full refresh with unchanged token state and exact arithmetic, this
corresponds to the special case \(\beta=0\) and \(\epsilon_F=0\).

Suppose one full refresh is applied after every \(R\) block-denoise steps. Let
\(t_n\) denote the event index immediately after the \(n\)-th full refresh and
define \(Y_n=E_{t_n}^{(b)}\). Let
$$
a_b=1+\lambda_b,\qquad
\rho=\beta a_b^R,
$$
and
$$
c=
\beta\epsilon_B\sum_{i=0}^{R-1}a_b^i+\epsilon_F.
$$
If
$$
\rho=\beta(1+\lambda_b)^R<1,
$$
then the refresh-boundary errors satisfy
$$
Y_n
\leq
\rho^nY_0
+
c\frac{1-\rho^n}{1-\rho}.
$$
Consequently,
$$
\limsup_{n\rightarrow\infty}Y_n
\leq
\frac{c}{1-\rho}.
$$
Furthermore, for any within-cycle step \(0\leq r<R\),
$$
E_{t_n+r}^{(b)}
\leq
a_b^rY_n
+
\epsilon_B\sum_{i=0}^{r-1}a_b^i.
$$
\end{proposition}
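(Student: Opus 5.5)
The plan is to unroll the two hypothesized one-step inequalities over a single refresh cycle, which reduces everything to a scalar affine recursion on the refresh-boundary errors $Y_n$. First I prove the within-cycle bound, since the boundary recursion is built on it. Fix $n$ and write $e_r=E_{t_n+r}^{(b)}$ for $0\le r\le R$; the events $t_n,\dots,t_n+R-1$ are block-denoise steps. By induction on $r$, with base case $e_0=Y_n$, the block-denoise hypothesis gives
\[
e_{r+1}\le a_b e_r+\epsilon_B\le a_b\Bigl(a_b^rY_n+\epsilon_B\sum_{i=0}^{r-1}a_b^i\Bigr)+\epsilon_B=a_b^{r+1}Y_n+\epsilon_B\sum_{i=0}^{r}a_b^i .
\]
This uses only that $a_b\ge 1>0$, so multiplying the inequality by $a_b$ preserves its direction. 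It yields the stated within-cycle bound for $0\le r<R$, and the same argument extends it to $r=R$.

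Next I apply the refresh inequality at event $t_n+R$. Since $\beta\ge0$, I can substitute the $r=R$ bound:
\[
Y_{n+1}\le\beta e_R+\epsilon_F\le\beta a_b^RY_n+\beta\epsilon_B\sum_{i=0}^{R-1}a_b^i+\epsilon_F=\rho Y_n+c .
\]
This requires $t_{n+1}=t_n+R+1$, which is exactly the scheduling convention of $R$ denoise steps followed by one refresh. I will state that indexing convention explicitly, because this bookkeeping is the only place where an off-by-one could break the argument.

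Finally, I solve the affine recursion $Y_{n+1}\le\rho Y_n+c$ with $0\le\rho<1$ and $c\ge0$. Induction on $n$, again using $\rho\ge0$ to preserve monotonicity, gives
\[
Y_n\le\rho^nY_0+c\sum_{j=0}^{n-1}\rho^j=\rho^nY_0+c\,\frac{1-\rho^n}{1-\rho}.
\]
Taking $\limsup$ as $n\to\infty$ and using $\rho^n\to0$ gives the asymptotic bound $\limsup_{n\to\infty}Y_n\le c/(1-\rho)$. The main obstacle is not analytic but bookkeeping: the event indexing and the nonnegativity of the coefficients $a_b$, $\beta$, and $\rho$ are what justify chaining the inequalities, and I will make both explicit. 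Everything else is routine.
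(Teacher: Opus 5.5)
Your proposal is correct and follows essentially the same route as the paper: unroll the block-denoise recurrence over the $R$ steps of a cycle, apply the refresh inequality to get $Y_{n+1}\le\rho Y_n+c$, iterate this affine recursion, and take the $\limsup$. The differences are cosmetic. You prove the within-cycle bound first by explicit induction and spell out the nonnegativity of $a_b$, $\beta$, $\rho$ and the indexing $t_{n+1}=t_n+R+1$, all of which the paper leaves implicit.
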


\begin{proof}
Starting immediately after a refresh at time \(t_n\), after \(R\)
block-denoise steps,
$$
E_{t_n+R}^{(b)}
\leq
a_b^R E_{t_n}^{(b)}
+
\epsilon_B\sum_{i=0}^{R-1}a_b^i.
$$
Applying one refresh gives
$$
\begin{aligned}
E_{t_{n+1}}^{(b)}
&\leq
\beta E_{t_n+R}^{(b)}
+
\epsilon_F \\
&\leq
\beta a_b^R E_{t_n}^{(b)}
+
\beta\epsilon_B\sum_{i=0}^{R-1}a_b^i
+
\epsilon_F.
\end{aligned}
$$
Thus,
$$
Y_{n+1}
\leq
\rho Y_n+c.
$$
Iterating the scalar recurrence yields
$$
Y_n
\leq
\rho^nY_0
+
c\sum_{j=0}^{n-1}\rho^j
=
\rho^nY_0
+
c\frac{1-\rho^n}{1-\rho}.
$$
Since \(\rho<1\), taking \(n\to\infty\) gives
$$
\limsup_{n\rightarrow\infty}Y_n
\leq
\frac{c}{1-\rho}.
$$
The within-cycle bound follows by applying the block-denoise recurrence for
\(r\) steps starting from \(Y_n\).
\end{proof}

\paragraph{Interpretation.}
This result is a stability lemma conditional on bounded block drift and refresh
contraction. It states a sufficient condition under which periodic full refreshes
prevent cache-consistency error from growing without bound.

\subsection{Proposition 3: Projected Cache Separation Diagnostics}
\label{prop:Branch_bifurcation_tanspace}

We now define the projected coordinate system used in the KV-cache diagnostics.
Let \(c_0\in\mathbb{R}^{D}\) be a fixed nonzero reference cache vector, such as
the shared prefill cache or the mean initial branch cache. Define
$$
u_0=\frac{c_0}{\lVert c_0\rVert_2}.
$$
Let \(e_1,e_2\in u_0^\perp\) be orthonormal tangent directions. In practice,
\(e_1,e_2\) may be chosen by PCA, a deterministic sketch, or another fixed
branch-independent projection rule. Let
$$
U=[u_0,e_1,e_2]\in\mathbb{R}^{D\times 3}.
$$

For each branch cache, decompose
\begin{equation}
\small
K_t^{(b)} - c_0
=
z_t^{(b)}u_0
+
a_t^{(b)}e_1
+
q_t^{(b)}e_2
+
h_t^{(b)} .
\end{equation}
where
\begin{equation}
\small
\begin{gathered}
z_t^{(b)}=\langle K_t^{(b)}-c_0,u_0\rangle,\quad
a_t^{(b)}=\langle K_t^{(b)}-c_0,e_1\rangle,\\
q_t^{(b)}=\langle K_t^{(b)}-c_0,e_2\rangle,\quad
h_t^{(b)}=(I-UU^\top)(K_t^{(b)}-c_0).
\end{gathered}
\end{equation}
By construction,
\begin{equation}
\small
h_t^{(b)}
\perp
\operatorname{span}\{u_0,e_1,e_2\}.
\end{equation}

For two branches \(i,j\), define the full cache distance
$$
d_{ij,t}^2
=
\left\lVert
K_t^{(i)}-K_t^{(j)}
\right\rVert_2^2
$$
and the projected cache distance
\begin{equation}
\small
\begin{aligned}
d_{\mathrm{proj},ij,t}^2
&=
\left|z_t^{(i)}-z_t^{(j)}\right|^2
+
\left|a_t^{(i)}-a_t^{(j)}\right|^2 \\
&\quad+
\left|q_t^{(i)}-q_t^{(j)}\right|^2 .
\end{aligned}
\end{equation}

\begin{proposition}[Orthogonal cache-distance decomposition]
Under the orthogonal decomposition above,
$$
\begin{aligned}
\left\lVert K_t^{(i)}-K_t^{(j)}\right\rVert_2^2
&=
\left|z_t^{(i)}-z_t^{(j)}\right|^2
+
\left|a_t^{(i)}-a_t^{(j)}\right|^2 \\
&\quad+
\left|q_t^{(i)}-q_t^{(j)}\right|^2
+
\left\lVert h_t^{(i)}-h_t^{(j)}\right\rVert_2^2.
\end{aligned}
$$
Therefore,
$$
d_{\mathrm{proj},ij,t}
\leq
d_{ij,t}.
$$
\end{proposition}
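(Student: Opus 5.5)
The plan is to write the difference of the two branch caches in the orthonormal frame and apply the Pythagorean theorem. Subtracting the decompositions of \(K_t^{(i)}-c_0\) and \(K_t^{(j)}-c_0\) cancels the common anchor \(c_0\) and gives
\[
K_t^{(i)}-K_t^{(j)}
=
\bigl(z_t^{(i)}-z_t^{(j)}\bigr)u_0
+
\bigl(a_t^{(i)}-a_t^{(j)}\bigr)e_1
+
\bigl(q_t^{(i)}-q_t^{(j)}\bigr)e_2
+
\bigl(h_t^{(i)}-h_t^{(j)}\bigr).
\]
This uses only the linearity of the inner-product coefficients and of the map \(I-UU^\top\). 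Before using it, I will check that the displayed decomposition is an identity. Because \(e_1,e_2\in u_0^\perp\) are orthonormal and \(u_0\) is a unit vector, the columns of \(U\) are orthonormal. Hence \(UU^\top\) is the orthogonal projector onto \(\operatorname{span}\{u_0,e_1,e_2\}\), and \(UU^\top v=\langle v,u_0\rangle u_0+\langle v,e_1\rangle e_1+\langle v,e_2\rangle e_2\) for any \(v\). Taking \(v=K_t^{(b)}-c_0\), this projection plus \(h_t^{(b)}=(I-UU^\top)v\) recovers \(v\).

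Next I will show that the four summands are mutually orthogonal. The three coordinate directions are orthonormal by construction. The residual difference \(h_t^{(i)}-h_t^{(j)}=(I-UU^\top)(K_t^{(i)}-K_t^{(j)})\) lies in the range of \(I-UU^\top\), which is the orthogonal complement of \(\operatorname{span}\{u_0,e_1,e_2\}\). This follows from the already-noted property \(h_t^{(b)}\perp\operatorname{span}\{u_0,e_1,e_2\}\) together with closure of that complement under subtraction. Expanding \(\|\cdot\|_2^2\) of the sum, every cross term vanishes, and the unit norms of \(u_0,e_1,e_2\) give exactly the claimed four-term identity for \(d_{ij,t}^2\).

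The inequality then follows by dropping the nonnegative term \(\|h_t^{(i)}-h_t^{(j)}\|_2^2\). This gives \(d_{\mathrm{proj},ij,t}^2\le d_{ij,t}^2\), and taking nonnegative square roots yields \(d_{\mathrm{proj},ij,t}\le d_{ij,t}\).

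There is no real obstacle here; the step needing the most care is confirming that \(U\) has orthonormal columns. That relies on \(c_0\neq 0\), so that \(u_0\) is well defined, and on \(e_1,e_2\) being chosen orthonormal inside \(u_0^\perp\). If PCA directions of the centered residuals were not first orthogonalized against \(u_0\), the identity would fail, so I will state this requirement explicitly.
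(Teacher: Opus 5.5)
Your proposal is correct and follows the same route as the paper: express $K_t^{(i)}-K_t^{(j)}$ in the orthonormal frame $\{u_0,e_1,e_2\}$ plus a residual lying in its orthogonal complement, apply the Pythagorean theorem, and drop the nonnegative residual term to get the inequality. You spell out details the paper leaves implicit, namely that $UU^\top$ is the orthogonal projector and that $c_0\neq 0$ and $e_1,e_2\in u_0^\perp$ orthonormal are needed, and these additions are sound.
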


\begin{proof}
The vectors \(u_0,e_1,e_2\) are orthonormal and each residual
\(h_t^{(b)}\) lies in the orthogonal complement of their span. The result
therefore follows from the Pythagorean theorem.
\end{proof}

We define average branch dispersion as
$$
D_t
=
\frac{1}{B(B-1)}
\sum_{i\neq j}
\left\lVert K_t^{(i)}-K_t^{(j)}\right\rVert_2^2.
$$
The projected coordinates \((z,a,q)\) provide a low-dimensional view of one
component of the full branch dispersion. Therefore, separation in the plotted
\((e_1,e_2,z)\) coordinates is a diagnostic for KV-cache trajectory divergence,
not a proof by itself that token predictions must differ.

\subsection{Case Study: KV Cache Vector Space Properties}
\label{app:kv-cache-case-study}

To examine whether the geometric claims of
Sec.~\ref{app:kv-cache-vector-space} hold across prompts, we present three
\texttt{HumanEval} case studies: samples~2, 15, and 19. For each sample, we
show two paired diagnostics: (i)~the projected KV-cache trajectory in the
$(e_1,e_2,z)$ coordinate system of Sec.~\ref{app:basic-operations}, where the
axial coordinate is $z=\langle K-c_0,u_0\rangle$ with
$u_0=c_0/\lVert c_0\rVert_2$; and (ii)~the token-level consensus heatmap
across the six block-size branches that share the prompt and prefill cache.
The KV diagnostic reflects geometric behavior of the cache vector, while the
consensus heatmap exposes the token-level outcome of that geometry at decoded
positions.

Across the three prompts, we observe the same three phenomena predicted by
Props.~\ref{prop:moreUpdateFromFullRefresh}--\ref{prop:Branch_bifurcation_tanspace}:
adjacent block-denoise events remain confined to small tangent-plane
neighborhoods; every large axial excursion in $z$ is initiated by a
full-sequence refresh rather than by local denoising; and branches that share
a common prefix fan out in $(e_1,e_2)$ at the same generation step where the
consensus heatmap records the first divergent token. This consistency
suggests that the conic-manifold view of KV-cache evolution is not an
artifact of a single prompt and that token-level bifurcation can be read off
the KV geometry directly.

\captionsetup[subfigure]{font=small,labelfont=bf}
\captionsetup[figure]{font=small,labelfont=bf}

\FloatBarrier
\begin{figure*}[!p]
  \centering
  \setlength{\abovecaptionskip}{3pt}
  \setlength{\belowcaptionskip}{-2pt}

  \begin{subfigure}[t]{0.94\textwidth}
    \centering
    \includegraphics[width=0.94\linewidth]{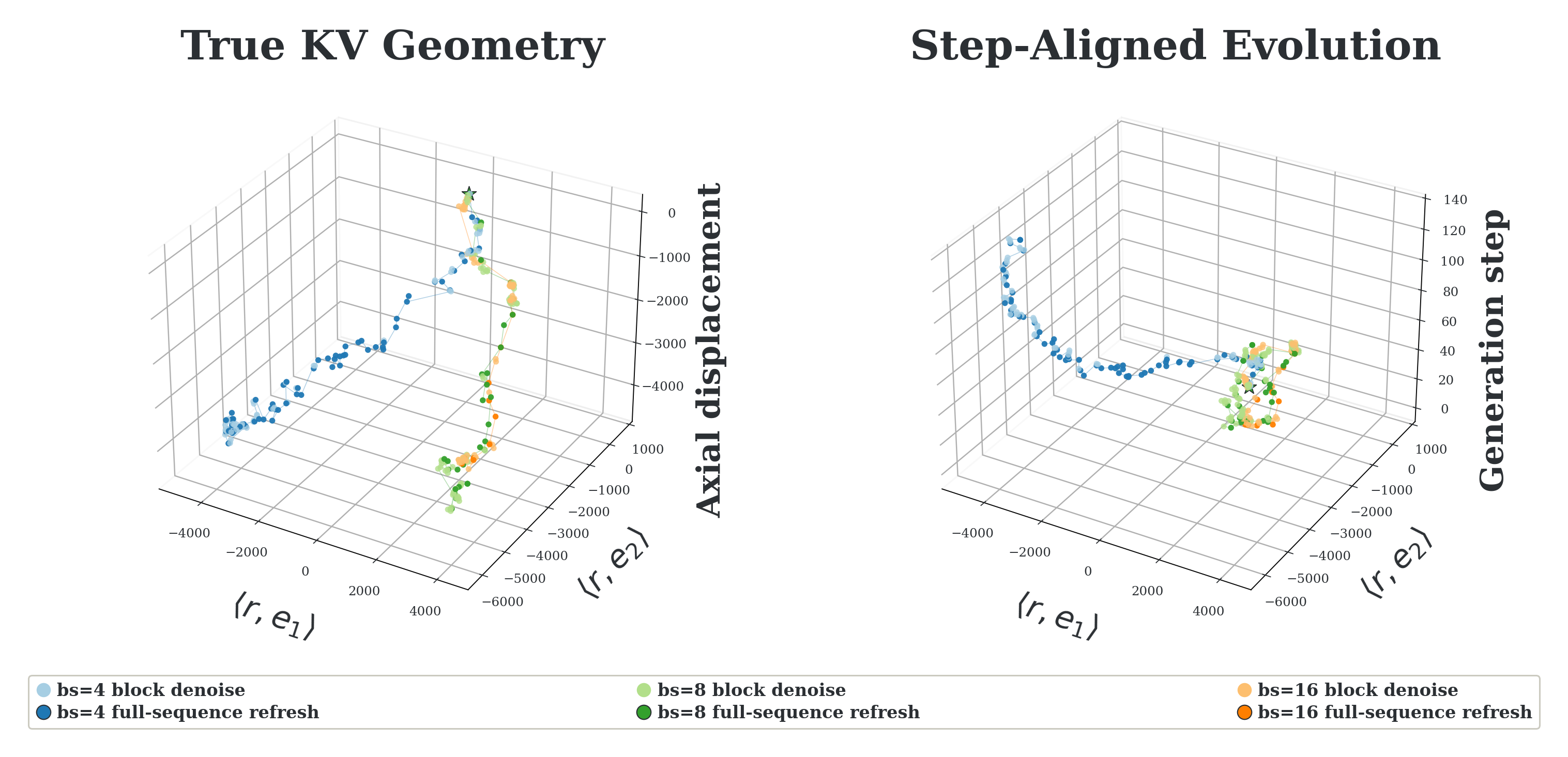}
    \caption{KV-cache trajectory: true KV geometry and step-aligned evolution.}
    \label{fig:kv-humaneval-2-trajectory}
  \end{subfigure}

  \vspace{0.35em}

  \begin{subfigure}[t]{0.86\textwidth}
    \centering
    \includegraphics[width=0.86\linewidth]{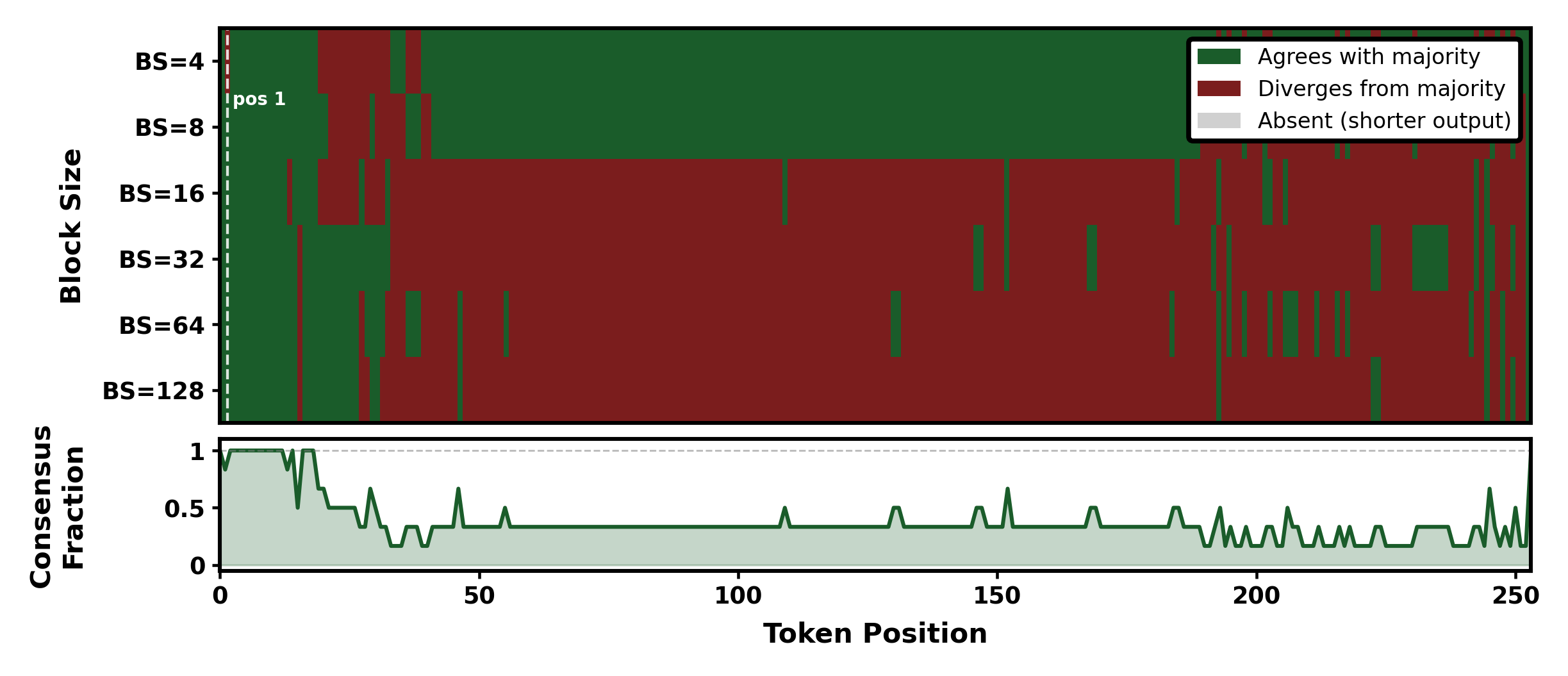}
    \caption{Block-size branch consensus heatmap.}
    \label{fig:consensus-humaneval-2}
  \end{subfigure}

  \vspace{-0.1em}

  \caption{\textbf{Case study: \texttt{HumanEval} sample~2.}
  States from all six block-size branches concentrate along a low-dimensional
  locus dominated by $u_0$. Block-denoise events form dense tangent clusters,
  whereas large axial jumps are initiated only by full-sequence refreshes
  (Prop.~\ref{prop:moreUpdateFromFullRefresh}). The consensus heatmap shows
  that all branches share an identical prefix and then bifurcate at the dashed
  line, matching the tangent-plane separation visible in the KV trajectory
  (Prop.~\ref{prop:Branch_bifurcation_tanspace}).}
  \label{fig:case-humaneval-2}
\end{figure*}
\FloatBarrier

\FloatBarrier
\begin{figure*}[!p]
  \centering
  \setlength{\abovecaptionskip}{3pt}
  \setlength{\belowcaptionskip}{-2pt}

  \begin{subfigure}[t]{0.94\textwidth}
    \centering
    \includegraphics[width=0.94\linewidth]{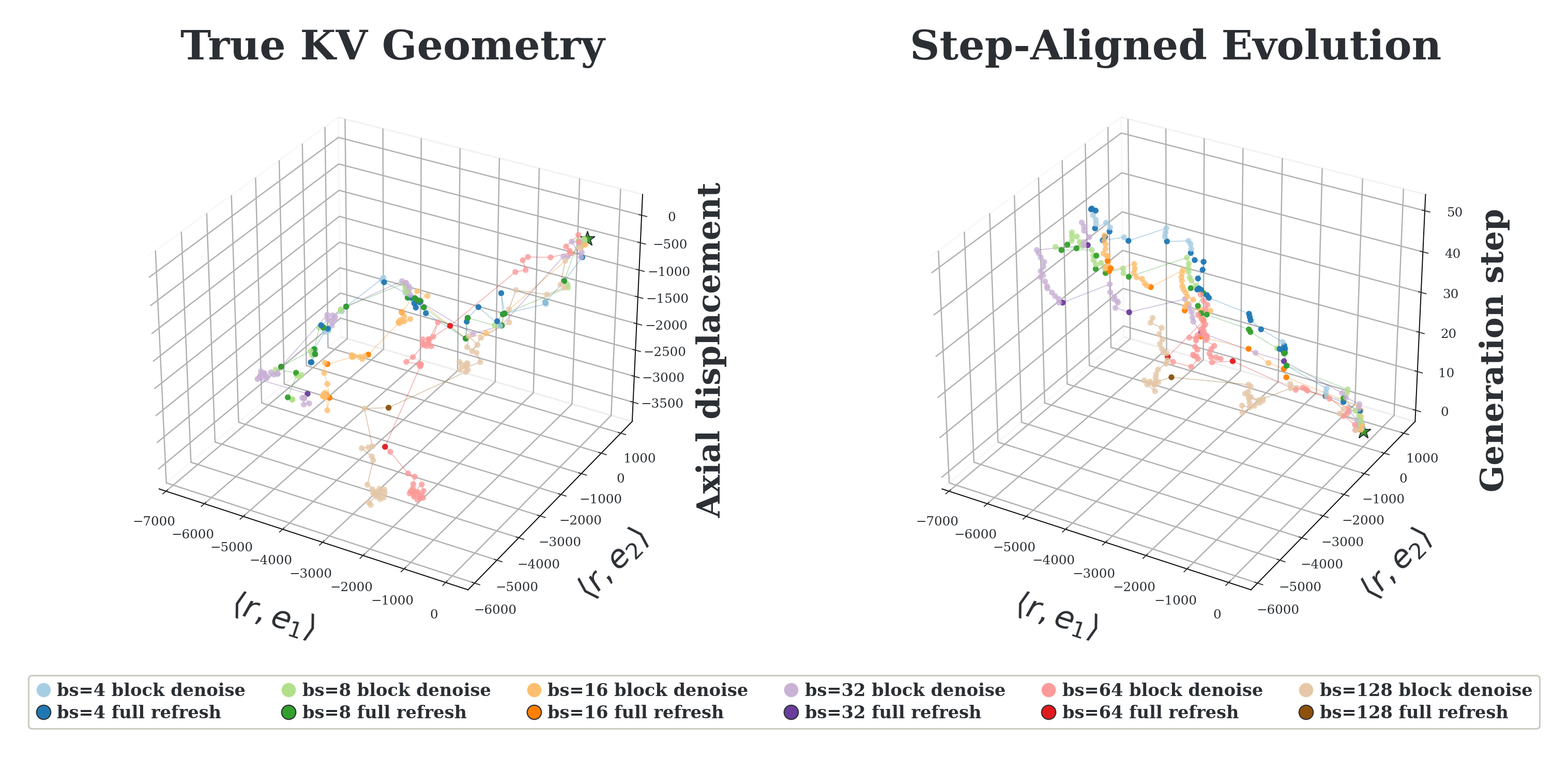}
    \caption{KV-cache trajectory: true KV geometry and step-aligned evolution.}
    \label{fig:kv-humaneval-15-trajectory}
  \end{subfigure}

  \vspace{0.35em}

  \begin{subfigure}[t]{0.86\textwidth}
    \centering
    \includegraphics[width=0.86\linewidth]{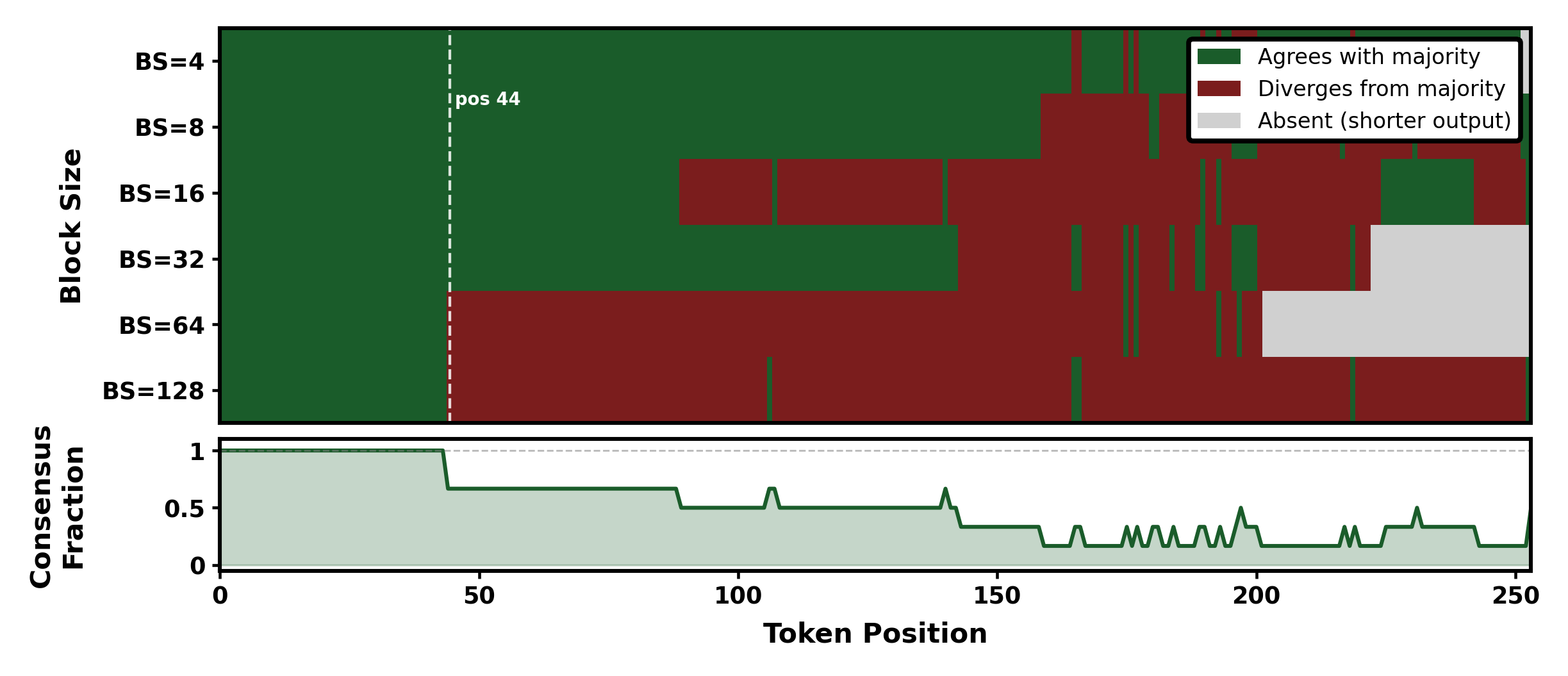}
    \caption{Block-size branch consensus heatmap.}
    \label{fig:consensus-humaneval-15}
  \end{subfigure}

  \vspace{-0.1em}

  \caption{\textbf{Case study: \texttt{HumanEval} sample~15.}
  The same local-region-with-refresh-jumps pattern appears on a longer prompt.
  Tangent spread accumulates with generation step, while the consensus panel
  shows progressive divergence of larger block sizes first, consistent with the
  Lipschitz block bound of Prop.~\ref{prop:moreUpdateFromFullRefresh}.}
  \label{fig:case-humaneval-15}
\end{figure*}
\FloatBarrier

\FloatBarrier
\begin{figure*}[!p]
  \centering
  \setlength{\abovecaptionskip}{3pt}
  \setlength{\belowcaptionskip}{-2pt}

  \begin{subfigure}[t]{0.94\textwidth}
    \centering
    \includegraphics[width=0.94\linewidth]{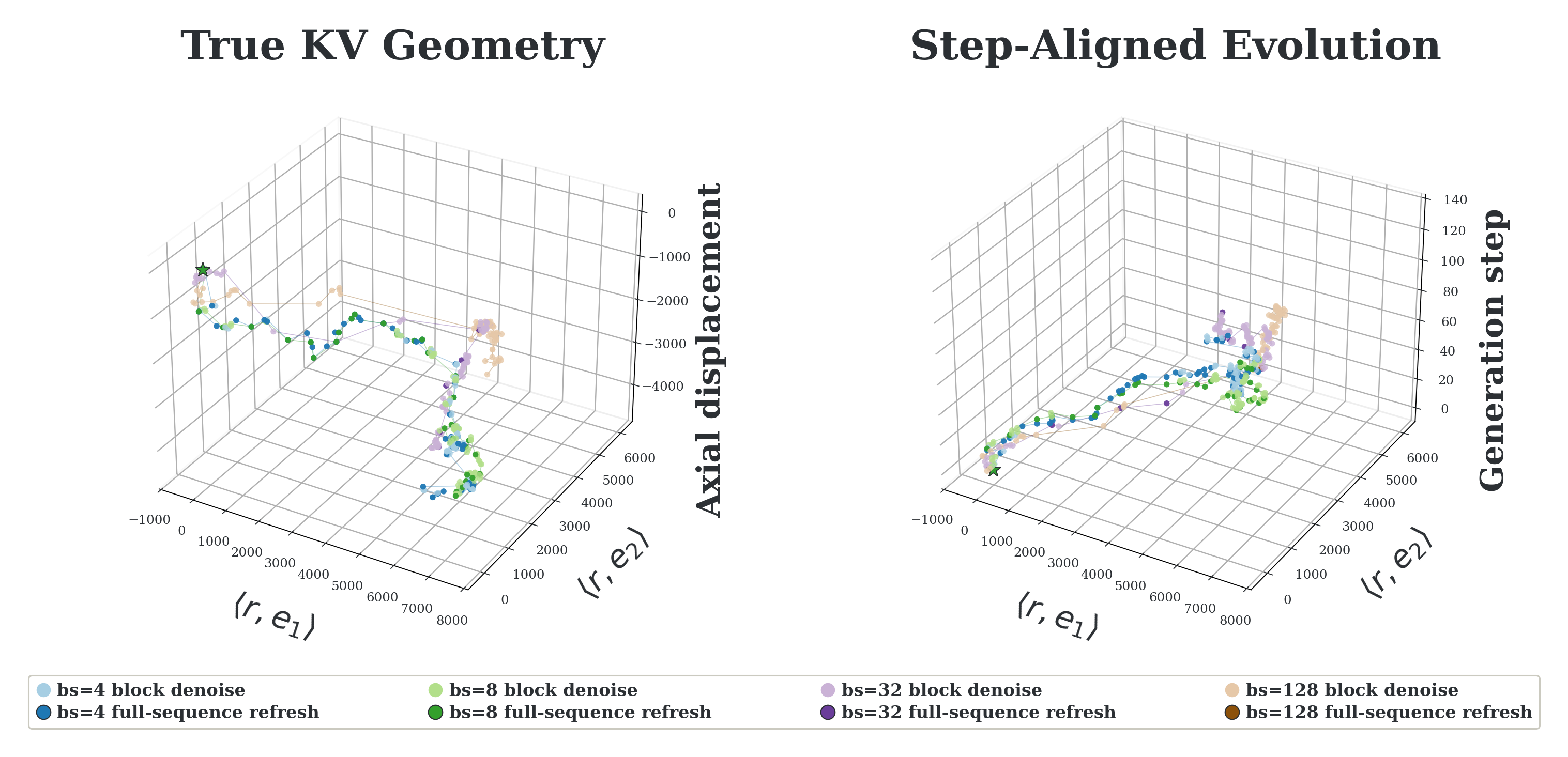}
    \caption{KV-cache trajectory: true KV geometry and step-aligned evolution.}
    \label{fig:kv-humaneval-19-trajectory}
  \end{subfigure}

  \vspace{0.35em}

  \begin{subfigure}[t]{0.86\textwidth}
    \centering
    \includegraphics[width=0.86\linewidth]{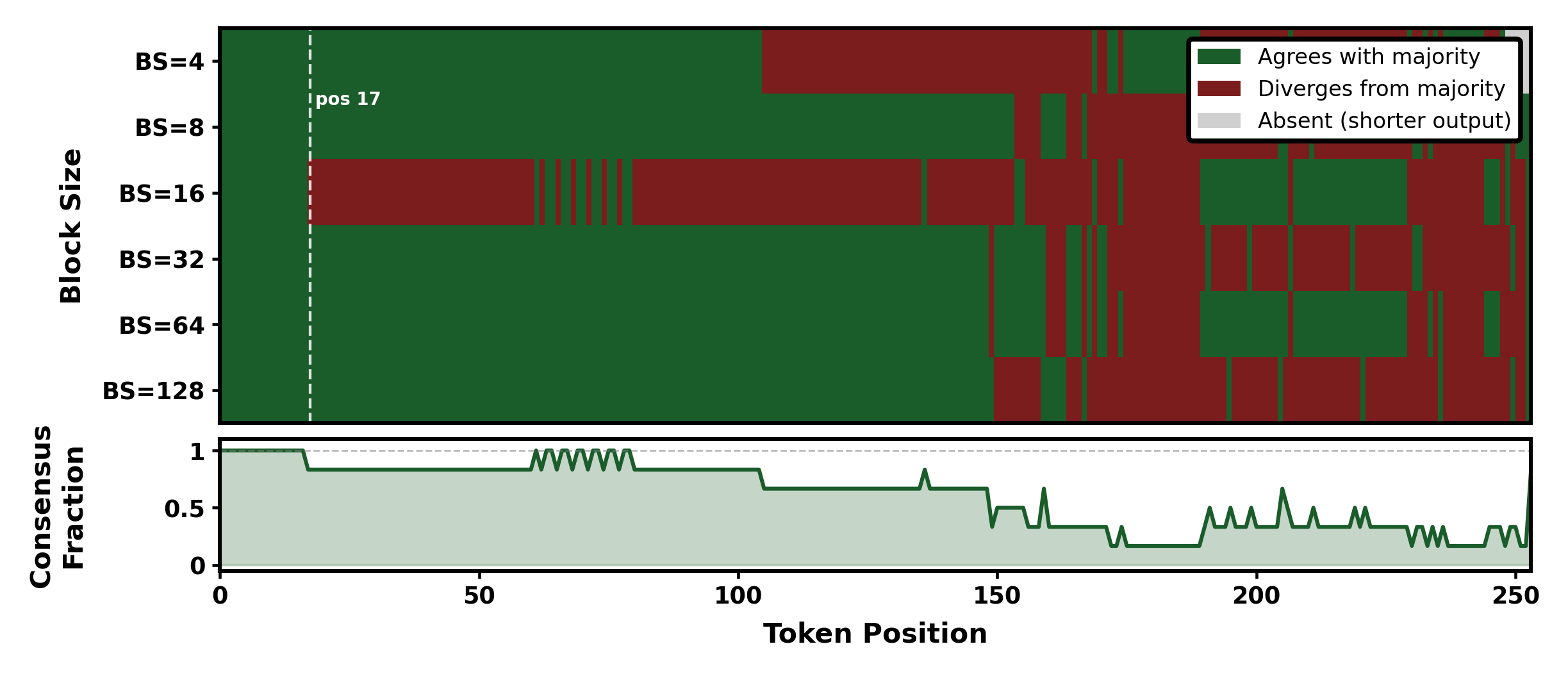}
    \caption{Block-size branch consensus heatmap.}
    \label{fig:consensus-humaneval-19}
  \end{subfigure}

  \vspace{-0.1em}

  \caption{\textbf{Case study: \texttt{HumanEval} sample~19.}
  This harder prompt exhibits earlier bifurcation. The tangent-plane fan-out in
  the KV trajectory and the early color split in the consensus heatmap occur at
  the same generation step, again confirming that branch bifurcation is the
  token-level signature of tangent-space separation
  (Prop.~\ref{prop:Branch_bifurcation_tanspace}).}
  \label{fig:case-humaneval-19}
\end{figure*}
\FloatBarrier

\paragraph{Summary.}
The three case studies provide consistent empirical support for the
step / axial decomposition used throughout
Sec.~\ref{app:kv-cache-vector-space}. The axial direction $u_0$ is moved
almost exclusively by full-sequence refreshes
(Prop.~\ref{prop:moreUpdateFromFullRefresh}), the bounded recurrence of
Prop.~\ref{refresh_stablize} is visible as tight axial banding between refresh
cycles, and the tangent-plane separation predicted by
Prop.~\ref{prop:Branch_bifurcation_tanspace} aligns step-for-step with the
position of first token-level divergence reported by the consensus heatmaps.
Together, these observations justify treating block-size branches as competing
local traversals of a shared KV-cache geometry rather than independent
generations.
\section{Appendix: Block Batching Algorithm}\label{app:block-batching-algorithm}
\subsection{Fused Block Batching with a Unified KV Cache}\label{app:fused-block-batching}

Let $\texttt{[M]}$ denote the mask token, $L_p$ the prompt length, and $G$ the generation length. For each block size $b_k \in \mathcal{B}$, BlockBatch maintains one branch state $S_k$ and one row in the unified token tensor and KV cache.Each branch owns its own token row and KV-cache row; rows are copied across branches only through the merge-and-sync policy.

\begin{algorithm}[!t]
\scriptsize
\caption{\textsc{BlockBatch}: Fused Block Batching with Row-Owned KV Cache}
\label{alg:blockbatch}
\begin{algorithmic}[1]
\REQUIRE Model $f_\theta$, prompt $\mathbf{p}$, generation length $G$,
block sizes $\mathcal{B}=\{b_1,\ldots,b_N\}$, mask token $\texttt{[M]}$,
decode threshold $\tau_{\mathrm{dec}}$, merge threshold $\tau_{\mathrm{conf}}$,
sync threshold $\tau_{\mathrm{sync}}$, refresh interval $R$
\ENSURE Generated sequence $\hat{\mathbf{x}}$

\STATE $L_p \gets |\mathbf{p}|$, \quad $L \gets L_p + G$
\STATE Initialize $\mathbf{X}\in\mathbb{N}^{N\times L}$ with $\texttt{[M]}$
\STATE Set $\mathbf{X}_{k,1:L_p}\gets \mathbf{p}$ for all $k$
\STATE Initialize branch states 
$S_k \gets (k,b_k,\mathrm{start}=L_p,\mathrm{end}=L_p+b_k,
\mathrm{done}=\mathrm{false})$

\STATE $(\mathbf{K},\mathbf{V}),\{\boldsymbol{\ell}^{(k)}\}_{k=1}^N
\gets \textsc{InitialFullForward}(f_\theta,\mathbf{X},\{S_k\})$
\COMMENT{one full forward; KV broadcast to all rows}

\FOR{each active branch $S_k$}
    \STATE $\textsc{DecodeCurrentBlock}
    (\mathbf{X}_{k},\boldsymbol{\ell}^{(k)},S_k,\tau_{\mathrm{dec}})$
\ENDFOR

\STATE $\textsc{AdvanceCompletedBlocks}(\mathbf{X},\{S_k\})$
\STATE $\textsc{MergeSync}
(\mathbf{X},\mathbf{K},\mathbf{V},\{S_k\},\tau_{\mathrm{conf}},\tau_{\mathrm{sync}})$
\STATE $(\mathbf{K},\mathbf{V}) \gets
\textsc{RefreshIfNeeded}
(f_\theta,\mathbf{X},\mathbf{K},\mathbf{V},\{S_k\},R)$

\WHILE{there exists $S_k$ such that $S_k.\mathrm{done}=\mathrm{false}$}
    \STATE $\mathcal{A} \gets
    \textsc{GetActiveBranches}(\mathbf{X},\{S_k\})$
    \COMMENT{unfinished branches with masked tokens in the current block}

    \STATE $\mathbf{Q} \gets
    \textsc{PackActiveBlocks}(\mathbf{X},\mathcal{A})$
    \COMMENT{varlen packing removes padded query tokens}

    \STATE $\{\boldsymbol{\ell}^{(k)}\}_{S_k\in\mathcal{A}}
    \gets
    \textsc{BatchedBlockForward}
    (f_\theta,\mathbf{Q},\mathbf{K},\mathbf{V},\mathcal{A})$

    \FOR{each $S_k\in\mathcal{A}$}
        \STATE $\textsc{DecodeCurrentBlock}
        (\mathbf{X}_{k},\boldsymbol{\ell}^{(k)},S_k,\tau_{\mathrm{dec}})$
        \STATE $\textsc{UpdateProbabilityMap}
        (S_k,\boldsymbol{\ell}^{(k)})$
        \STATE $\textsc{CheckEOS}(S_k,\mathbf{X}_{k},\boldsymbol{\ell}^{(k)})$
    \ENDFOR

    \IF{there exists an EOS-ready branch}
        \RETURN $\textsc{SelectEOSWinner}(\{S_k\},\mathbf{X})$
    \ENDIF

    \STATE $\textsc{AdvanceCompletedBlocks}(\mathbf{X},\{S_k\})$
    \STATE $\textsc{MergeSync}
    (\mathbf{X},\mathbf{K},\mathbf{V},\{S_k\},
    \tau_{\mathrm{conf}},\tau_{\mathrm{sync}})$
    \STATE $(\mathbf{K},\mathbf{V}) \gets
    \textsc{RefreshIfNeeded}
    (f_\theta,\mathbf{X},\mathbf{K},\mathbf{V},\{S_k\},R)$
\ENDWHILE

\STATE $k^\star \gets \arg\max_k S_k.\mathrm{progress}$
\RETURN $\mathbf{X}_{k^\star}$
\end{algorithmic}
\end{algorithm}

\subsection{Merge and Synchronization Policy}\label{app:merge-sync-policy}

\begin{algorithm}[!t]
\scriptsize
\caption{\textsc{MergeSync}: Cross-branch merge and synchronization}
\label{alg:merge-sync}

\begin{minipage}{\linewidth}
\raggedright
\textbf{Input:} Branch states $\mathcal{S}=\{S_1,\ldots,S_N\}$, token matrix $\mathbf{X}$,
unified cache $(\mathbf{K},\mathbf{V})$\\
\textbf{Parameters:} merge threshold $\tau_{\mathrm{conf}}$, sync threshold $\tau_{\mathrm{sync}}$
\end{minipage}
\vspace{0.3em}

\begin{algorithmic}[1]

\STATE $S_{\mathrm{lead}}\gets
\operatorname*{arg\,max}_{S_k\in\mathcal{S}}
S_k.\mathrm{tokens\_decoded}$

\IF{$S_{\mathrm{lead}}.\mathrm{tokens\_decoded}=0$}
    \STATE \textbf{return}
\ENDIF

\STATE $\mathcal{O}\gets \textsc{SortActiveBranches}(\mathcal{S})$
\STATE \textit{Merge step: fill masked positions using compatible branches.}

\FOR{each destination branch $S_d\in\mathcal{O}$}
    \STATE $\mathcal{C}_d\gets
    \textsc{CompatibleSources}(S_d,\mathcal{O})$

    \FOR{each position $i\in \textsc{WindowUnion}(\mathcal{C}_d)$}
        \IF{$\mathbf{X}_{d,i}\neq \texttt{[M]}$}
            \STATE \textbf{continue}
        \ENDIF

        \STATE $\mathcal{C}_{d,i}\gets
        \{S_s\in\mathcal{C}_d:
        \mathbf{X}_{s,i}\neq\texttt{[M]}\}$

        \IF{$\mathcal{C}_{d,i}\neq\emptyset$}
            \STATE $S_{s^\star}\gets
            \operatorname*{arg\,max}_{S_s\in\mathcal{C}_{d,i}}
            P_d(i,\mathbf{X}_{s,i})$

            \STATE $v^\star\gets \mathbf{X}_{s^\star,i}$,
            \quad $p^\star\gets P_d(i,v^\star)$

            \IF{$p^\star>\tau_{\mathrm{conf}}$}
                \STATE $\mathbf{X}_{d,i}\gets v^\star$
                \STATE $S_d.\mathrm{tokens\_merged}
                \gets S_d.\mathrm{tokens\_merged}+1$
            \ENDIF
        \ENDIF
    \ENDFOR

    \IF{$\textsc{CurrentBlockDecoded}(S_d,\mathbf{X})$}
        \STATE $\textsc{AdvanceBlock}(S_d)$
    \ENDIF
\ENDFOR

\STATE \textit{Hard synchronization: copy the leader into lagging branches.}

\FOR{each active branch $S_d\in\mathcal{S}$ with $S_d\neq S_{\mathrm{lead}}$}
    \IF{$S_{\mathrm{lead}}.\mathrm{tokens\_decoded}
    - S_d.\mathrm{tokens\_decoded}
    > \tau_{\mathrm{sync}}$}

        \STATE $\mathbf{X}_{d,:}\gets \mathbf{X}_{\mathrm{lead},:}$
        \STATE $(\mathbf{K}_{d},\mathbf{V}_{d})
        \gets
        (\mathbf{K}_{\mathrm{lead}},\mathbf{V}_{\mathrm{lead}})$
        \STATE $\textsc{RealignBlockWindow}(S_d,\mathbf{X}_{d,:})$
        \STATE $S_d.\mathrm{tokens\_decoded}
        \gets S_{\mathrm{lead}}.\mathrm{tokens\_decoded}$
    \ENDIF
\ENDFOR

\end{algorithmic}
\end{algorithm}

\section{Appendix: Reproducibility, Artifacts, and Compute Details}\label{app:reproducibility}

\subsection{Scientific Artifacts}\label{app:scientific-artifacts}

We use publicly available diffusion language models, benchmark datasets, and evaluation protocols.The model artifacts include LLaDA-Instruct-8B~\cite{nie2025large} and Dream-Base-7B~\cite{ye2025dream}. The benchmark artifacts include GSM8K~\cite{cobbe2021training}, MATH~\cite{hendrycksmath2021}, HumanEval~\cite{chen2021evaluating}, and MBPP~\cite{austin2021program}. We cite the original creators of all models and datasets in the main paper and use them only for research evaluation.

\subsection{Artifact Licenses and Intended Use}\label{app:artifact-licenses}

We use existing models and datasets according to their stated research and evaluation purposes.No new human-subject dataset is collected in this work. The experiments are limited to inference-time evaluation and do not involve training on private, personal, or newly collected user data.

\subsection{Evaluation Setup}
  \label{app:evaluation-setup}

  We evaluate GSM8K, MATH, HumanEval, and MBPP using the standard task prompts and
  metrics. GSM8K uses 5-shot prompting, MATH uses 4-shot prompting, MBPP uses
  3-shot prompting, and HumanEval uses 0-shot prompting. HumanEval and MBPP are
  evaluated with execution-based correctness after code sanitization.

  Unless otherwise stated, all methods use greedy decoding with
  $
  G=256,\qquad \text{batch size}=1,\qquad \tau_{\mathrm{conf}}=0.9.
  $
  We report accuracy, average NFE, and average latency. Latency is measured as
  per-example generation wall-clock time and excludes model loading and metric
  post-processing. For Vanilla decoding, NFE is 256. For Fast-dLLM and
  LocalLeap, NFE is the average number of model forwards used by the decoding
  procedure. For BlockBatch, one batched forward pass counts as one NFE; the
  reported NFE includes initial block denoising, block denoising, and full
  refresh forwards.

  Table \ref{tab:block_batching_compact_b32} compares Vanilla decoding, Fast-dLLM dual cache, LocalLeap, and
  BlockBatch on LLaDA-1.5-8B, LLaDA-Instruct-8B, and Dream-Base-7B.
  Fast-dLLM dual cache is evaluated with block size
  $
  32.
  $
  BlockBatch uses $B=\{4,8,16,32,64,128\}$,
$\tau_{\mathrm{sync}}=8$, $R=32$, and
$\tau_{\mathrm{merge}}=0.5$.
  LocalLeap uses its default script configuration:
  $
  \texttt{threshold}=0.9,\qquad
  \texttt{radius}=4,
  $
  with
  $
  \texttt{relaxed\_threshold}=0.75
  $
  for LLaDA models and
  $
  \texttt{relaxed\_threshold}=0.8
  $
  for Dream-Base-7B.

  For Table~\ref{tab:sync_threshold}, we ablate the BlockBatch synchronization
  threshold over
  $
  \tau_{\mathrm{sync}}\in\{4,8,16,32,64\}.
  $
  All other decoding settings match the Table \ref{tab:block_batching_compact_b32} BlockBatch configuration. The
  table reports GSM8K and HumanEval results for LLaDA-Instruct-8B and
  Dream-Base-7B.

  For Table~\ref{tab:refresh_interval}, we ablate the full-refresh interval over
  $
  R\in\{4,8,16,32,64,128\},
  $
  while fixing
  $
  \tau_{\mathrm{sync}}=8,\qquad
  \mathcal{B}=\{4,8,16,32,64,128\}.
  $
\subsection{Computational Infrastructure and Budget}\label{app:compute-budget}

Latency and throughput profiling experiments were run on NVIDIA H200 GPUs.



\end{document}